%% file: main.tex
\documentclass{article}

\PassOptionsToPackage{numbers, compress}{natbib}
\usepackage[preprint]{neurips_2026}

\usepackage[utf8]{inputenc}
\usepackage[T1]{fontenc}
\usepackage{hyperref}
\usepackage{url}
\usepackage{booktabs}
\usepackage{amsfonts}
\usepackage{nicefrac}
\usepackage{microtype}
\usepackage{xcolor}
\usepackage{graphicx}
\input{macros}

\title{Delightful Exploration}

\author{%
  Ian Osband \\
  Google DeepMind \\
  \texttt{iosband@google.com} \\
}

\begin{document}
\maketitle

\begin{abstract}
Most exploration algorithms search broadly until uncertainty is resolved.
When the action space is too large to resolve within budget, practitioners default to $\varepsilon$-greedy, which bounds disruption but spends its override blindly.
We introduce \textit{Delight-gated exploration} (DE), a host--override rule that spends exploratory actions only when their prospective delight (expected improvement times surprisal) exceeds a gate price.
This practical heuristic recovers a classical result: Pandora's reservation-value rule for costly search, with surprisal setting the effective inspection cost.
Resolved arms exit the gate, fresh arms shut off above a prior-determined threshold, and selected linear-bandit overrides consume finite information budget.
Across Bernoulli bandits, linear bandits, and tabular MDPs, the same hyperparameters transfer without retuning, and DE shows much weaker regret growth than Thompson Sampling and $\varepsilon$-greedy in the tested unresolved regimes.
Delight improves acting for the same reason it improves learning: it prices scarce resources by the product of upside and surprisal.
\end{abstract}

\section{Introduction}
\label{sec:intro}

Most exploration theory studies the long-run regime, where the agent has enough interaction to resolve the optimal policy.
Optimism, posterior sampling, information-directed sampling, and Bayes-optimal planning all search broadly until uncertainty is resolved~\citep{lattimore2020bandit,russo2014ids}.
That broad search pays off when the budget is large enough to amortize it.
We target the unresolved regime: action spaces or horizons large enough that broad search cannot be amortized within the available budget.
In this regime, large-scale systems default to $\varepsilon$-greedy.
The reasons are practical: $\varepsilon$-greedy is additive, posterior-free, and simple to implement.
It follows a production policy most of the time and only overrides with probability $\varepsilon$, so the worst-case disruption is bounded.
The problem is not the override rate, but how the override is spent.
$\varepsilon$-greedy spends it blindly, often on actions whose posterior upside is already negligible.

We propose \emph{Delight-gated exploration} (DE), which keeps the same additive host-override structure but replaces the blind override with a targeted one.
DE uses posterior information to filter candidates.
On override rounds, it selects only actions whose \emph{prospective delight} (the product of expected improvement and surprisal) exceeds a gate price.
Actions without enough upside do not pass the gate.
A fixed gate price therefore implements satisficing search.
The agent explores while posterior upside exceeds the price of deviation, not until all uncertainty is resolved.
Like $\varepsilon$-greedy, DE is a mixture of host and override.
The difference is that the override is targeted rather than blind.
This paper transposes the Delightful Policy Gradient~\citep{osband2026delightfulpolicygradient} from learning to acting.
There, delight gates which updates to apply; here, prospective delight gates which actions to try.

The distinction is AND versus OR.
Standard optimism keeps actions exploration-eligible when uncertainty remains high.
DE requires both posterior upside and surprisal.
When the environment cannot be fully resolved, this difference becomes decisive.
The optimistic set remains broad because many actions are still uncertain, while the delight-gated set shrinks as the host improves.

The same gate is also a reservation-value rule.
In Weitzman's Pandora problem, inspecting an action costs $c$ and reveals its latent value.
The optimal policy inspects only while the current best value is below a reservation threshold~\citep{weitzman1979optimal}.
DE recovers the same eligibility condition, with the gate price and surprisal jointly setting the effective inspection cost.
For fresh $\mathrm{Beta}(1,1)$ Bernoulli arms, this gives a shutoff threshold above which no untried arm passes the gate.
DE explores not because an action is uncertain, but because resolving that uncertainty is worth its price.

We make four contributions.
First, we introduce DE and show strong empirical scaling across Bernoulli bandits, linear bandits, and tabular MDPs, with the same hyperparameters transferred across all three (Section~\ref{sec:experiments}).
Second, we connect DE to optimal search theory: the gate is Pandora eligibility, and horizon-priced reservation search achieves the optimal prior-tail rate in a revealed-value discovery model (Sections~\ref{sec:reservation} and~\ref{sec:tail_scaling}).
Third, we give structural analysis explaining the observed scaling: resolved arms exit the gate, fresh arms shut off at a reservation threshold, and selected linear-bandit overrides consume finite information budget (Section~\ref{sec:analysis}).
Fourth, we state the limits clearly: fixed-price DE is a finite-budget heuristic, and a full regret bound for the greedy-host noisy-bandit version remains open (Section~\ref{sec:limits}).

\section{Algorithm}
\label{sec:algorithm}

Delight-gated exploration augments a greedy host with a sparse exploratory override.
With probability $1-\varepsilon_t$ the agent follows the host.
With probability $\varepsilon_t$ it draws from the override.
The only question is how to score candidate exploratory actions.
Our answer mirrors the Delightful Policy Gradient: use the product of upside and novelty.
For exploration, upside is measured by expected improvement and novelty by surprisal under the host policy.

\subsection{Definition}
\label{sec:definition}

Consider a finite action set $\Ac$ with unknown expected rewards $f(a)$.
Let $m_a(t) = \E[f(a)\mid\Hc_t]$ be the posterior mean reward given history $\Hc_t$.
The host baseline is $v_t = \max_a m_a(t)$.
We write $\pi_t^{\mathrm{host}}$ for the host policy.
The analysis below studies the greedy limit ($\pi_t^{\mathrm{host}}$ plays $\argmax_a m_a(t)$), while experiments use a near-greedy Boltzmann host.

We measure upside by expected improvement over the host, $\EI_t(a) = \E[(f(a)-v_t)^+\mid\Hc_t]$.

Novelty is measured by a capped relative surprisal $\surp_t(a) = \min\{[-\log \pi_t^{\mathrm{host}}(a)-\ell_{\min,t}]_+,\,L\}$, where $\ell_{\min,t} = \min_b\{-\log \pi_t^{\mathrm{host}}(b)\}$.
This makes the host's preferred action have zero surprisal.
For a greedy host, non-host actions saturate at $L$, and DE reduces to a thresholded EI rule.

Prospective delight is the product $\tilde{\delight}_t(a) = \EI_t(a)\surp_t(a)$.
This transposes the principle of the Delightful Policy Gradient family~\citep{osband2026delightfulpolicygradient,osband2026doesgradientsparkjoy} from learning to acting.
Prospective delight must exceed $\lambda$ for an exploratory override to fire.

The gated set collects actions with $\tilde{\delight}_t(a)\ge\lambda$.
If non-empty, the override samples $q_t^\lambda(a)\propto \tilde{\delight}_t(a)\one\{a\in\Gc_t\}$; otherwise it defaults to the host.
The final acting policy is
\begin{equation}
\label{eq:acting}
\pi_t^{\mathrm{act}} = (1-\varepsilon_t)\pi_t^{\mathrm{host}} + \varepsilon_t q_t^\lambda,
\end{equation}
where $\varepsilon_t = M/(M+t)$ is annealed with half-life $M$.
The gate price $\lambda$ is measured in reward units times surprisal; across tasks we use rewards or value estimates on comparable unit scales.

\subsection{A Reservation-Value View}
\label{sec:reservation}

The delight gate has a classical search interpretation.
Consider a revealed-value search problem.
Each action $a$ has a latent value $X_a$ drawn from a known prior.
Inspecting $a$ reveals $X_a$ and costs $c_a$, and the decision maker keeps the best revealed value.
Pandora's rule assigns each action a reservation value.
When the reservation equation has a unique solution, this value $z_a$ satisfies
\begin{equation}
\label{eq:pandora_reservation}
  \E[(X_a-z_a)^+] = c_a .
\end{equation}
When the solution is not unique, the generalized inverse below gives the same stopping rule.
Pandora opens actions in decreasing order of reservation value and stops once the current best value exceeds every unopened reservation value~\citep{weitzman1979optimal}.

DE has the same reservation structure.
For an action with host-relative surprisal $\surp_a>0$, define the effective inspection cost $c_a := \lambda/\surp_a$.
When $\surp_a=0$, we interpret the effective inspection cost as infinite: such actions are handled by the host rather than the exploratory override.
Then action $a$ passes the delight gate at baseline $v$ iff
\[
  \surp_a\E[(X_a-v)^+] \ge \lambda
  \quad\Longleftrightarrow\quad
  \E[(X_a-v)^+] \ge c_a.
\]
Define the delight reservation index
\begin{equation}
\label{eq:delight_reservation}
  z_a^\lambda
  :=
  \sup\left\{
    z:\surp_a\,\E[(X_a-z)^+] \ge \lambda
  \right\}.
\end{equation}

\begin{proposition}[The DE gate is Pandora eligibility]
\label{prop:pandora_gate}
In the revealed-value search model with fixed $\surp_a>0$, let $\Gc(v)$ denote the DE gated set at baseline $v$.
Then
\[
  \Gc(v)
  =
  \{a:\surp_a\E[(X_a-v)^+]\ge\lambda\}
  =
  \{a:z_a^\lambda\ge v\}.
\]
In the unthrottled revealed-value search model, if the override always selects the gated action with largest $z_a^\lambda$, the resulting inspection order recovers Pandora's ordering rule.
\end{proposition}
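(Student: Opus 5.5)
The argument rests on one function: $g_a(z) := \surp_a\,\E[(X_a-z)^+]$. Everything follows from its monotonicity and continuity. First, the first equality in the proposition is just the definition of the gated set. In the revealed-value model the posterior of $f(a)$ is the prior of $X_a$, so $\EI(a)=\E[(X_a-v)^+]$ and $\tilde{\delight}(a)=\surp_a\E[(X_a-v)^+]$. The gate condition $\tilde{\delight}\ge\lambda$ is therefore literally $g_a(v)\ge\lambda$.

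Second, I establish the regularity of $g_a$, assuming $\E|X_a|<\infty$ so that $g_a$ is finite. The map $z\mapsto (x-z)^+$ is nonincreasing, convex and $1$-Lipschitz for each $x$. Taking expectations, $g_a$ is nonincreasing, convex and $\surp_a$-Lipschitz, hence continuous. Also $g_a(z)\to 0$ as $z\to\infty$ by dominated convergence.

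Third, I identify the set. Let $S_a=\{z: g_a(z)\ge\lambda\}$. Because $g_a$ is nonincreasing, $S_a$ is a down-set, i.e. an interval unbounded below or empty. Because $g_a$ is continuous, $S_a$ is closed. Since $\lambda>0$ and $g_a\to 0$, $S_a$ is bounded above. So $S_a=(-\infty,z_a^\lambda]$ with $z_a^\lambda=\sup S_a$ attained, with the convention $z_a^\lambda=-\infty$ if $S_a$ is empty. It follows that $g_a(v)\ge\lambda \iff v\le z_a^\lambda$, which is the second equality. This step is the main obstacle, because of two edge cases: I need closedness, so the supremum is attained and the non-strict inequality survives, and I need the nonunique-solution case handled. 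Continuity covers both. On a flat piece of $g_a$ at level $\lambda$, the supremum picks the right endpoint, which is the generalized-inverse reservation value the paper refers to. When a unique solution exists, $g_a(z_a^\lambda)=\lambda$, i.e. $\E[(X_a-z_a^\lambda)^+]=\lambda/\surp_a=c_a$. This matches \eqref{eq:pandora_reservation} with $c_a=\lambda/\surp_a$, so $z_a^\lambda$ is exactly Weitzman's reservation value for cost $c_a$.

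Finally, for the ordering claim, I run the unthrottled process, letting $v$ be the current best revealed value and removing opened boxes. At each step the override picks the gated box with the largest $z_a^\lambda$. By the set identity, a nonempty gated set means some unopened box has $z_a^\lambda\ge v$. In that case the largest-index gated box is the unopened box of maximal reservation value overall, since every box with a larger index would also be gated. If the gated set is empty, every unopened $z_a^\lambda<v$, so the override defers to the host and search stops. This is exactly Weitzman's rule: open in decreasing reservation order and stop once the best value exceeds all remaining reservation values. A short induction on the number of opened boxes then shows the two inspection sequences coincide. Ties at $z_a^\lambda=v$ are resolved the same way by both rules, up to the convention of whether Pandora stops on equality, which is payoff-irrelevant.
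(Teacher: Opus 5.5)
Your proposal is correct and follows the paper's proof. In both, continuity and monotonicity of $g_a$ make the gate set a closed lower interval with right endpoint $z_a^\lambda$, and the max-index override then reproduces Weitzman's rule: open the largest remaining reservation value, and stop once the current best value exceeds every remaining one. Your extra details (boundedness from $g_a\to 0$, the explicit induction over opened boxes) make explicit what the paper leaves implicit. Incidentally, your own convexity observation shows that a flat piece at level $\lambda>0$ cannot occur, so that edge case is vacuous.
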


\begin{proof}
Let $g_a(z)=\E[(X_a-z)^+]$.
This function is continuous and nonincreasing in $z$ for integrable $X_a$.
Since $g_a$ is continuous and nonincreasing, the set $\{z:\surp_a g_a(z)\ge\lambda\}$ is a closed lower interval, possibly empty.
Its right endpoint is $z_a^\lambda$ under the convention that the supremum of the empty set is $-\infty$.
Therefore $\surp_a g_a(v)\ge\lambda$ exactly when $v\le z_a^\lambda$.
This proves the gated-set identity.
Weitzman's rule opens an eligible box with maximal reservation value and stops when no reservation value exceeds the current best value.
Thus DE recovers Pandora's eligibility condition; with an unthrottled max-reservation override, it also recovers Pandora's ordering rule.
\end{proof}

For fresh arms sharing a common prior $F$ and capped surprisal $L$, define the fresh-arm reservation threshold by the same generalized inverse:
\begin{equation}
\label{eq:general_reservation}
  v_\lambda
  :=
  \sup\left\{
    v:\E_{X\sim F}[(X-v)^+] \ge \lambda/L
  \right\}.
\end{equation}
When the equation has a unique solution, this is equivalently
$\E_{X\sim F}[(X-v_\lambda)^+]=\lambda/L$.
Different priors induce different reservation thresholds, making the gate price interpretable in units of prior tail value.

The equivalence above is exact for revealed-value search.
In noisy bandits, a pull does not reveal the latent mean $f(a)$.
There, DE uses posterior expected improvement as a value-of-perfect-information proxy.
Appendix~\ref{app:kg_ei} shows that, for independent-arm posteriors, one-step knowledge gradient is bounded above by expected improvement.

\subsection{Bernoulli Bandits}
\label{sec:bernoulli}

We specialize to $K$-armed Bernoulli bandits with independent $\mathrm{Beta}(1,1)$ priors.
After $n_a$ pulls with $S_a$ successes, the posterior is $\mathrm{Beta}(1+S_a,\,1+n_a-S_a)$.
Its mean is $m_a(t) = (1+S_a)/(2+n_a)$.

Expected improvement admits a closed form via the regularized incomplete beta function.
For analysis, a one-sided second-moment bound is sharper than a tail bound.
For any random variable $X$ with mean $m$ and variance $\sigma^2$, and any $v>m$,
\begin{equation}
\label{eq:ei_second_moment}
\E[(X-v)^+] \le \frac{\sigma^2}{v-m}.
\end{equation}
On the event $X\ge v$ we have $X-v\le X-m\le (X-m)^2/(v-m)$.
Taking expectations gives the claim.
The Beta posterior variance satisfies $\Var[f(a)\mid\Hc_t] \le 1/(4(n_a+3))$, so
\begin{equation}
\label{eq:beta_ei}
\EI_t(a) \le \frac{1}{4(n_a+3)(v_t - m_a(t))} \qquad\text{whenever }m_a(t)<v_t.
\end{equation}
Expected improvement decays with repeated pulls.
Once a suboptimal arm is sufficiently resolved, its prospective delight falls below $\lambda$ and it exits the gated set.

For untried arms, the reservation threshold~\eqref{eq:general_reservation} gives the shutoff condition directly.
With a $\mathrm{Beta}(1,1)$ prior, $X\sim\mathrm{Uniform}(0,1)$ and $\E[(X-v)^+]=(1-v)^2/2$.
For $0<\lambda<L/2$, solving $(1-v)^2/2=\lambda/L$ yields
\begin{equation}
\label{eq:fresharm}
v_{\mathrm{off}} = 1 - \sqrt{2\lambda/L}.
\end{equation}
For $\lambda\ge L/2$, the generalized threshold is nonpositive, so fresh arms do not pass the gate under the $\mathrm{Beta}(1,1)$ prior for baselines $v_t\ge 0$.
On any round where $v_t > v_{\mathrm{off}}$, no untried arm belongs to the gated set, regardless of how many arms remain.
Because the algorithm uses the current baseline $v_t$ rather than a running maximum, the fresh-arm gate can reopen if the host baseline drops after failures.

\begin{center}
\begin{minipage}{0.88\columnwidth}
\begin{algorithm}[H]
\caption{Delight-gated exploration}
\label{alg:explore}
\begin{algorithmic}[1]
\Require Posteriors $\{p_t(\cdot\mid a)\}$, price $\lambda$, cap $L$, schedule $\varepsilon_t$
\State $m_a(t) \gets \E[f(a)\mid\Hc_t]$ for all $a$; \quad $v_t \gets \max_a m_a(t)$
\State $\ell_{\min,t} \gets \min_b\{-\log \pi_t^{\mathrm{host}}(b)\}$
\For{each $a\in\Ac$}
    \State $\EI_t(a) \gets \E[(f(a)-v_t)^+\mid\Hc_t]$
    \State $\surp_t(a) \gets \min\{[-\log \pi_t^{\mathrm{host}}(a)-\ell_{\min,t}]_+,\,L\}$
    \State $\tilde{\delight}_t(a) \gets \EI_t(a)\surp_t(a)$
\EndFor
\State $\Gc_t \gets \{a : \tilde{\delight}_t(a) \ge \lambda\}$
\If{$\Gc_t \ne \emptyset$}
    \State $q_t^\lambda(a) \propto \tilde{\delight}_t(a)\one\{a\in\Gc_t\}$
\Else
    \State $q_t^\lambda \gets \pi_t^{\mathrm{host}}$
\EndIf
\State Draw $A_t \sim (1-\varepsilon_t)\pi_t^{\mathrm{host}} + \varepsilon_t q_t^\lambda$
\end{algorithmic}
\end{algorithm}
\end{minipage}
\end{center}

\section{Experiments}
\label{sec:experiments}

We test DE across three settings, each adding one axis of difficulty.
Bernoulli bandits test scaling with independent arms and exact posteriors.
Linear bandits add generalization: pulling one arm teaches the agent about others via shared features.
Tabular MDPs (DeepSea) add sequential structure: the agent must coordinate $H$ actions to reach a distant reward.
The same hyperparameters ($M{=}100$, $\lambda{=}0.1$, $L{=}10$) transfer across all three without retuning.
Because $\lambda$ has units of reward times surprisal, this transfer is meaningful only when rewards or value estimates are on comparable scales.
In all experiments, returns are approximately unit-scale.

\subsection{Bernoulli Bandits: Exact Posteriors}
\label{sec:exp_bernoulli}

We use $\varepsilon_t = M/(M+t)$ with $M{=}100$, $\lambda{=}0.1$, $L{=}10$.
Baselines are Thompson Sampling and annealed $\varepsilon$-greedy with the same schedule.
All curves show mean regret over 100 seeds with $\pm 1$ standard error.

Figure~\ref{fig:bernoulli_arms} shows the main scaling result.
As $K$ grows, DE's regret remains nearly flat while Thompson Sampling and $\varepsilon$-greedy degrade.
Section~\ref{sec:fresharms} explains this: when the host baseline exceeds the shutoff threshold, untried arms are excluded.

\begin{figure}[ht!]
\centering
\includegraphics[width=0.5\textwidth]{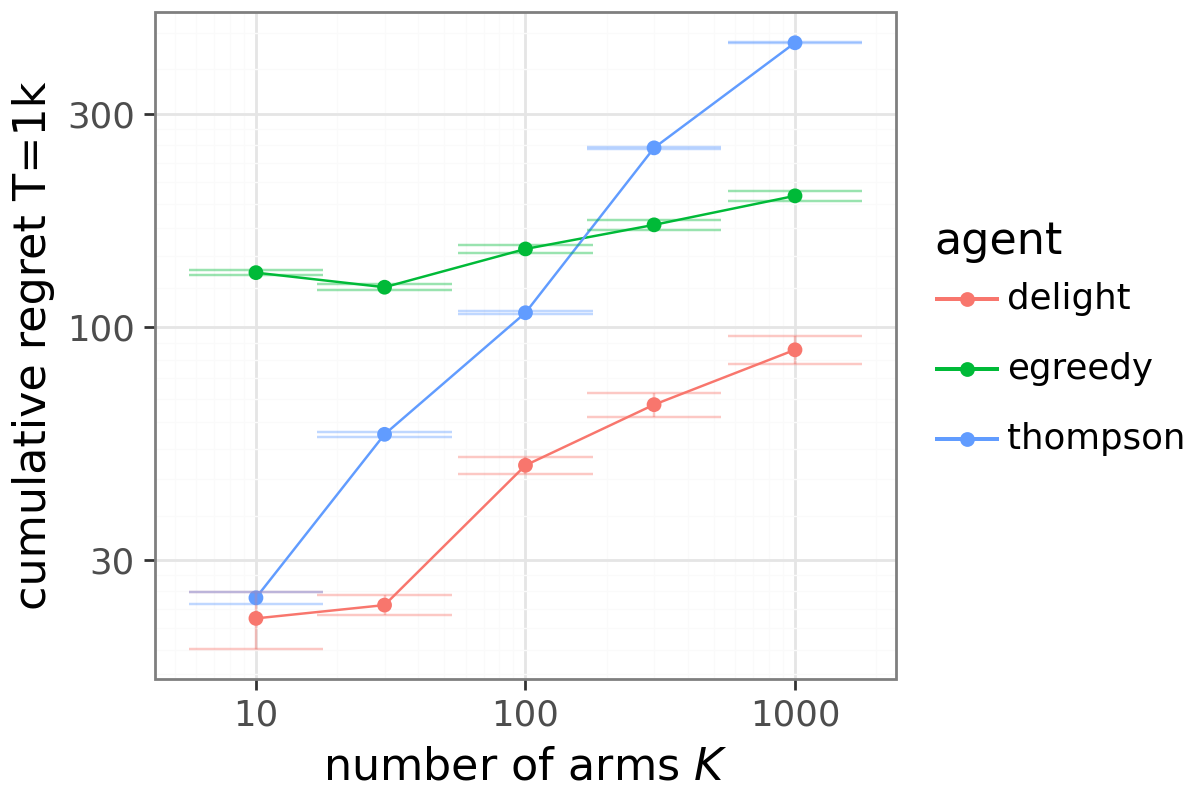}
\caption{Cumulative regret after $T{=}1000$ rounds as the number of arms $K$ increases.
DE remains nearly flat while Thompson Sampling and annealed $\varepsilon$-greedy degrade with $K$.}
\label{fig:bernoulli_arms}
\end{figure}
Figure~\ref{fig:bernoulli_facet} confirms the two regimes over time.
At $K=10$, the horizon is large relative to the environment and DE matches Thompson Sampling.
At $K=1000$, the environment cannot be resolved within budget, and the gap widens steadily as DE avoids wasted exploration.

\begin{figure}[ht!]
\centering
\includegraphics[width=0.95\textwidth]{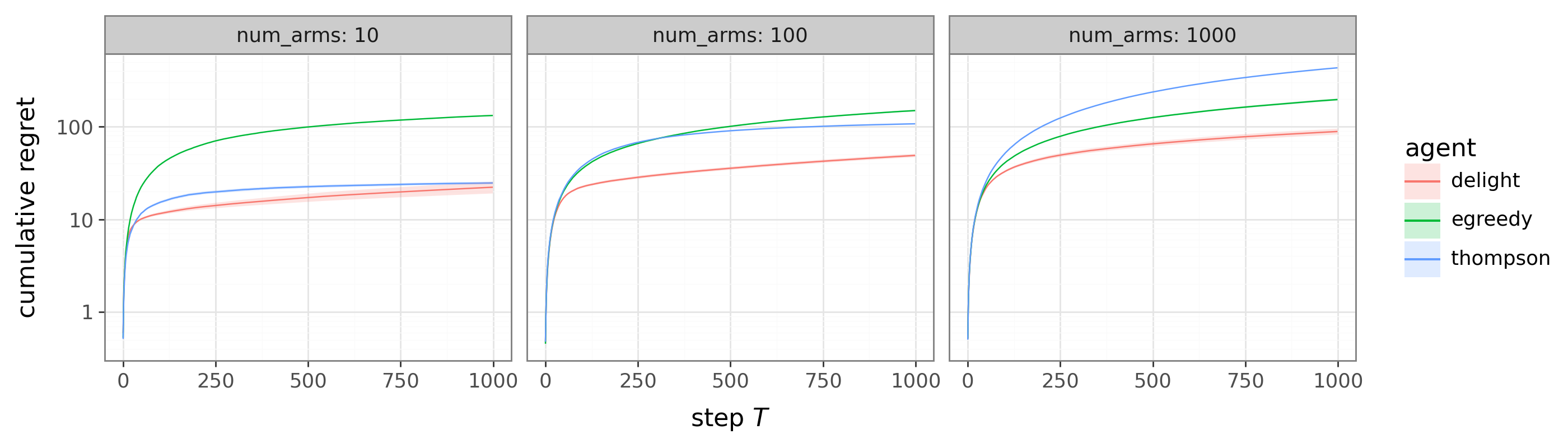}
\caption{Learning curves for $K \in \{10, 100, 1000\}$.
When the environment is small, DE matches Thompson Sampling.
As the number of arms grows, DE increasingly outperforms both baselines by shutting off wasted exploration.
Shaded regions show $\pm 1$ standard error over 100 seeds.}
\label{fig:bernoulli_facet}
\end{figure}

Appendix Figure~\ref{fig:bernoulli_tune} shows sensitivity to the annealing half-life $M$ and gate price $\lambda$.
DE outperforms Thompson Sampling over a broad range of both hyperparameters, indicating that the gains do not depend on delicate tuning.

\subsection{Linear Bandits: Transfer Without Retuning}
\label{sec:exp_linear}

We now test whether the scaling observation from independent arms carries over to a setting with generalization.
In a linear bandit, arms share a $d$-dimensional feature representation, so each pull teaches the agent about all arms with similar features.
We transfer the Bernoulli hyperparameters directly ($M=100$, $\lambda=0.1$), without retuning.
For the dimension sweep we fix $K=100$ and $\sigma=1.0$.
For the noise sweep we fix $K=100$ and $D=30$.

Figure~\ref{fig:linear_overall} shows that the same qualitative picture persists.
DE outperforms both baselines across the full sweep over the dimension $D$ and across the full range of noise levels.
The key point is not just that DE wins, but that the Bernoulli configuration transfers directly to the linear case without modification.
The gate captures a structural property of exploration, not a peculiarity of the exact posterior.

\begin{figure}[ht!]
\centering
\begin{subfigure}{0.48\textwidth}
  \centering
  \includegraphics[width=\linewidth]{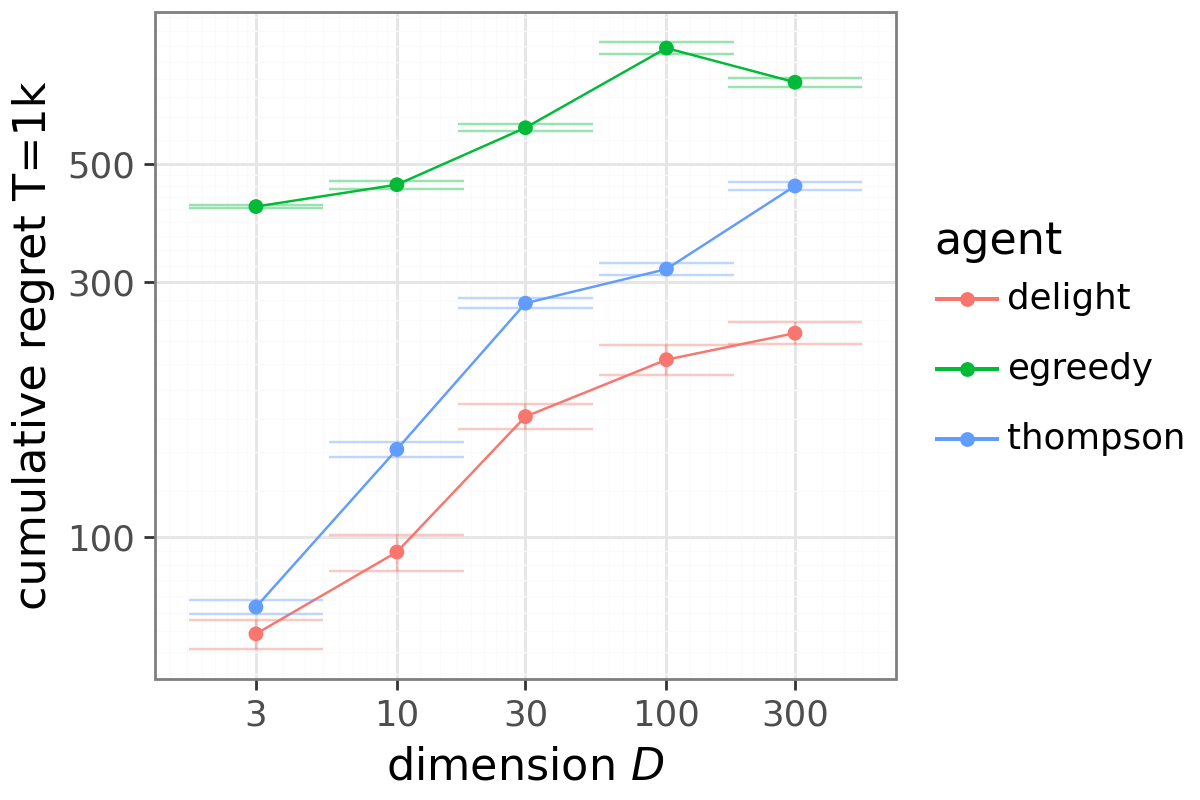}
  \caption{Regret vs.\ dimension $D$ ($K{=}100, \sigma{=}1.0$).}
  \label{fig:linear_dim}
\end{subfigure}
\hfill
\begin{subfigure}{0.48\textwidth}
  \centering
  \includegraphics[width=\linewidth]{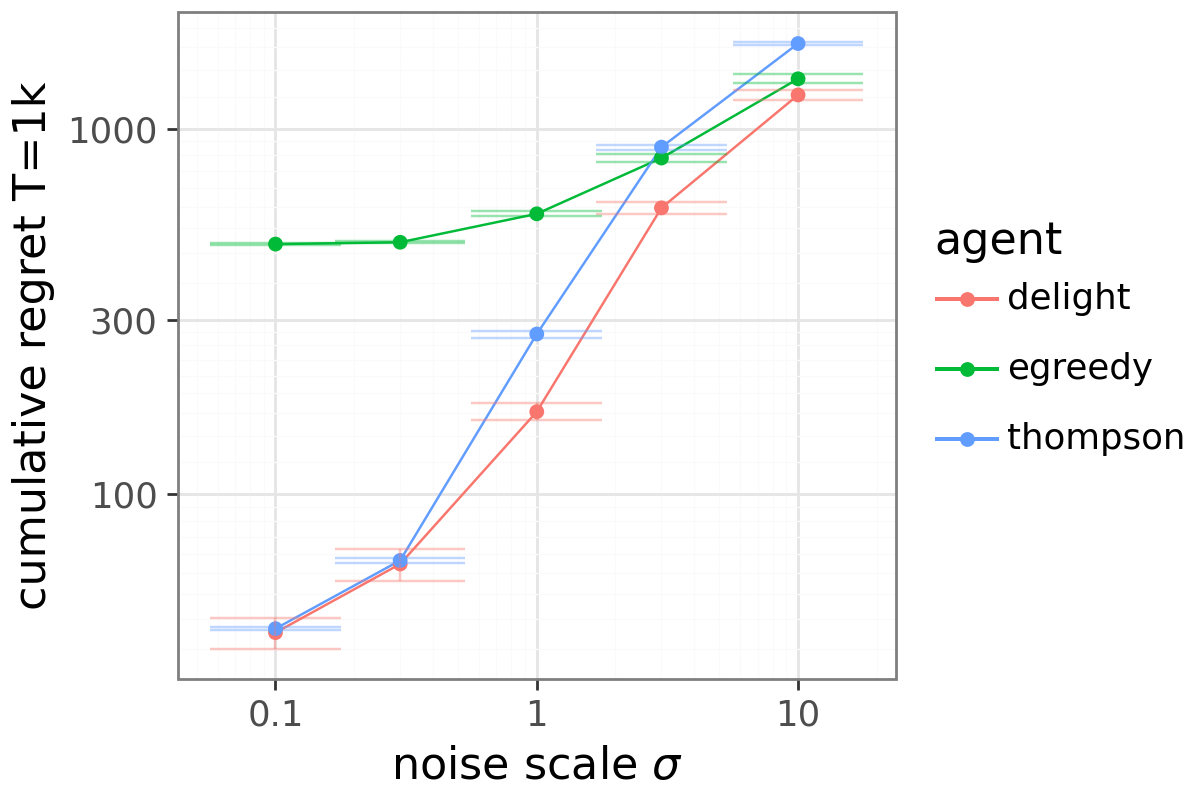}
  \caption{Regret vs.\ noise $\sigma$ ($D{=}30, K{=}100$).}
  \label{fig:linear_sigma}
\end{subfigure}
\caption{Linear bandits with no retuning from the Bernoulli setting.
Left: regret as the dimension $D$ increases at fixed $K{=}100$ and $\sigma{=}1.0$.
Right: regret as the noise scale $\sigma$ increases at fixed $D{=}30$ and $K{=}100$.
DE outperforms both baselines across both sweeps.}
\label{fig:linear_overall}
\end{figure}

The corresponding sensitivity analysis (Appendix, Figure~\ref{fig:linear_tune}) shows the same broad basin of good performance as in the Bernoulli setting.
DE continues to outperform Thompson Sampling across a wide range of $M$ and $\lambda$, supporting the view that the mechanism transfers rather than merely adapts.

\subsection{Tabular MDPs: DeepSea}
\label{sec:exp_deepsea}

We now test whether the delight gate extends to sequential problems with long-term consequences.
DeepSea is a tabular MDP of depth $H$ with binary actions at each state~\citep{osband2016deep,osband2020bsuite}.
It is a canonical test for deep exploration: only one specific sequence of $H$ actions reaches the reward at the far end of the grid, while all other paths yield zero.
A pure greedy policy never finds the reward.
$\varepsilon$-greedy requires $\Omega(2^H)$ episodes to find it by chance.

We apply DE per step, using a mean-based surrogate for expected improvement rather than exact Bayesian EI.
Exact sequential EI requires propagating full posterior variance through the Bellman equation, which is intractable for any nontrivial MDP; the surrogate preserves the gating structure while remaining tractable (see Appendix~\ref{app:exp_details} for details).
At the start of episode $e$, we freeze a posterior-mean planning model and compute values $Q^{\mathrm{plan}}_e$.
The host is a Boltzmann distribution over these frozen planning values,
\[
  \pi^{\mathrm{host}}_e(a\mid s)
  \propto
  \exp(Q^{\mathrm{plan}}_e(s,a)/\tau),
  \qquad \tau=0.01.
\]
During the episode, the posterior is updated online as transitions and rewards are observed, producing an updated posterior-mean evaluator $Q^{\mathrm{post}}_{e,t}$.
We measure sequential improvement relative to the frozen host baseline:
\[
  \EI_{e,t}(s,a)
  =
  \left(Q^{\mathrm{post}}_{e,t}(s,a)-V^{\mathrm{plan}}_e(s)\right)_+,
  \qquad
  V^{\mathrm{plan}}_e(s)
  =
  \sum_b \pi^{\mathrm{host}}_e(b\mid s)Q^{\mathrm{plan}}_e(s,b).
\]
Surprisal is computed under the frozen host:
\[
  \surp_e(s,a)
  =
  \min\{[-\log \pi^{\mathrm{host}}_e(a\mid s)-\ell_{\min,e}(s)]_+,L\}.
\]
This stale-plan formulation is important: new observations can create positive improvement for actions that remain surprising under the plan that would otherwise be deployed.
It also avoids the two-action same-snapshot degeneracy in which the host action has zero surprisal and the surprising action has nonpositive improvement.

We use the same gate price $\lambda$ and surprisal cap $L$ as in the Bernoulli setting, without retuning.
The agent maintains a Beta-Dirichlet posterior over rewards and transitions; true dynamics are unknown and must be learned.
PSRL~\citep{strens2000bayesian,osband2013more} uses the same posterior and planning code but samples a full MDP from the posterior each episode.
In this tabular benchmark, DE matches or outperforms PSRL across the tested problem sizes; we defer the full learning curves over time to Appendix~\ref{app:exp_details}.

\begin{figure}[ht!]
\centering
\includegraphics[width=0.5\textwidth]{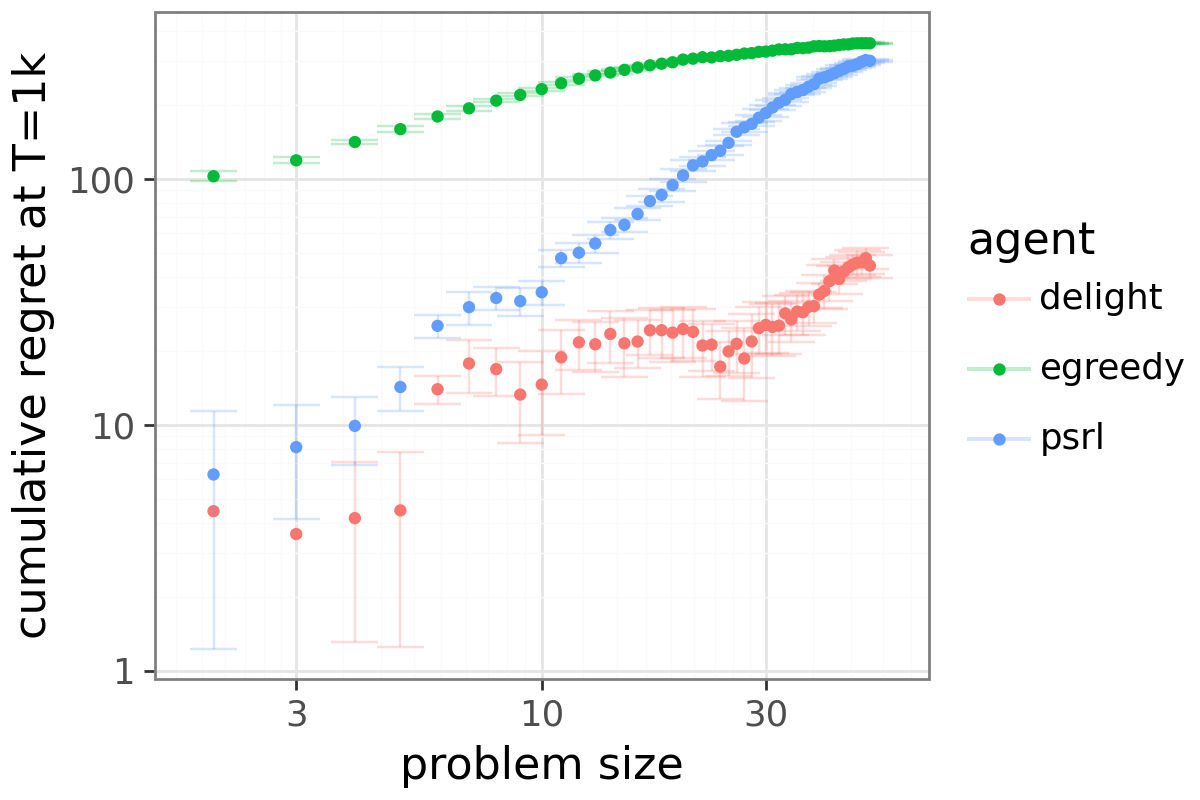}
\caption{Regret at $T{=}1000$ vs problem size $H$ for DeepSea. DE maintains lower regret than PSRL in this sweep, while $\varepsilon$-greedy degrades rapidly.}
\label{fig:deepsea_regret}
\end{figure}

\section{Why It Works: Structural Analysis}
\label{sec:analysis}

The experiments show three consistent patterns: DE matches classical methods in small worlds, scales gracefully in large worlds, and outperforms $\varepsilon$-greedy throughout.
This section provides structural analysis that explains each of these observations.

All structural results in this section analyze the deterministic greedy-host limit of Algorithm~\ref{alg:explore}.
In that limit, every non-host action has surprisal $L$, so the analyzed rule is effectively a thresholded expected-improvement rule.
The stochastic-host version used in the experiments is a nearby heuristic whose full product structure is active because surprisal varies across actions.

\subsection{Resolved Arms Drop Out}
\label{sec:small_bernoulli}

We formalize the Bernoulli setting with a greedy host and delight gate.

\begin{assumption}[Bernoulli bandit with delight gate]
\label{assump:main}
We consider a $K$-armed Bernoulli bandit with unknown means $\mu_a \in [0,1]$ and independent $\mathrm{Beta}(1,1)$ priors.
The host is greedy with respect to posterior means.
The override uses schedule $\varepsilon$, gate price $\lambda>0$, and surprisal cap $L>0$.
We assume a cold start in which each arm is pulled once, contributing an additive $O(K)$ term.
Let $a^\star$ denote an optimal arm and $\Delta_a := \mu_{a^\star} - \mu_a$ the suboptimality gap.
\end{assumption}

The gate mechanism admits two structural facts, proved in Appendix~\ref{app:proofs_small}.
Once both a suboptimal arm and the optimal arm have been pulled enough times, the posterior means separate and the greedy host permanently identifies the best arm.

\begin{lemma}[Separation]
\label{lem:separation_main}
On the concentration event $\mathcal{E}_\delta$ (probability $\ge 1-\delta$), if both $n_a(t) \ge N_a$ and $n_{a^\star}(t) \ge N_a$ where $N_a = \lceil 64\Delta_a^{-2}\log(2KT/\delta)\rceil$, then $m_{a^\star}(t) - m_a(t) \ge \Delta_a/2$.
\end{lemma}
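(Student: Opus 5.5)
The plan is to specify the concentration event $\mathcal{E}_\delta$ concretely, use it to show that each posterior mean $m_a(t)=(1+S_a)/(2+n_a)$ lies within $\Delta_a/4$ of $\mu_a$, and subtract. I take $\mathcal{E}_\delta$ to be the event that, for every arm $a$ and every sample count $n\le T$, the empirical mean $\hat\mu_{a,n}$ of the first $n$ rewards from arm $a$ satisfies
\[
|\hat\mu_{a,n}-\mu_a|\le\sqrt{\log(2KT/\delta)/(2n)}.
\]
Hoeffding's inequality gives a two-sided failure probability of at most $2\exp(-2n\epsilon^2)=\delta/(KT)$ for each pair $(a,n)$. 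A union bound over the $KT$ pairs then shows $\Pr(\mathcal{E}_\delta)\ge 1-\delta$. Indexing by sample count rather than by round keeps the argument valid under adaptive sampling: the rewards of arm $a$ form an i.i.d. stack drawn in order, independent of when the arm is chosen.

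Next I control the Beta(1,1) prior bias. Since
\[
m_a(t)-\hat\mu_{a,n}=\frac{1-2\hat\mu_{a,n}}{n+2},
\]
we get $|m_a(t)-\hat\mu_{a,n}|\le 1/(n+2)$. Hence on $\mathcal{E}_\delta$,
\[
|m_a(t)-\mu_a|\le\sqrt{\frac{\log(2KT/\delta)}{2n}}+\frac{1}{n+2}.
\]
Now plug in $n\ge N_a=\lceil 64\Delta_a^{-2}\log(2KT/\delta)\rceil$. The first term is at most $\Delta_a/\sqrt{128}<\Delta_a/11$. For the second term I use $\Delta_a\le1$, together with $\log(2KT/\delta)\ge\log 2$ whenever $K\ge1$, $T\ge1$, $\delta\le1$. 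Then $n\ge 64\log 2\,\Delta_a^{-2}\ge 44\Delta_a^{-2}$, so $1/(n+2)\le\Delta_a^2/44\le\Delta_a/44$. Together the error is at most $\Delta_a/11+\Delta_a/44<\Delta_a/4$.

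Applying this bound to both $a$ and $a^\star$, each of which has at least $N_a$ samples, gives
\[
m_{a^\star}(t)-m_a(t)\ge\mu_{a^\star}-\mu_a-2\cdot\frac{\Delta_a}{4}=\frac{\Delta_a}{2}.
\]

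The only delicate point is the prior-bias term. The generous constant 64 is meant to absorb it, but this needs $\log(2KT/\delta)$ to be bounded away from zero, which is why I make the mild assumption $\delta\le1$ explicit. A secondary point is that $\mathcal{E}_\delta$ must hold uniformly over random sample counts; the per-arm reward-stack construction with a union bound over $n\le T$ handles this without any martingale machinery.
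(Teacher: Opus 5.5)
Your proof is correct and follows the paper's own argument. Both use Hoeffding with a union bound over (arm, sample-count) pairs, the exact prior-bias bound $|m_a-\hat\mu_a|\le 1/(n+2)$, and the triangle inequality applied to both $a$ and $a^\star$. The differences are cosmetic: you split the $\Delta_a/4$ error budget as $\Delta_a/11+\Delta_a/44$ rather than the paper's $\Delta_a/8+\Delta_a/8$, and you state explicitly the conditions $\delta\le 1$ and $\Delta_a\le 1$, which the paper's shrinkage step $1/(N_a+2)\le\Delta_a/8$ uses without saying so.
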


After separation, the delight gate ensures that DE selects any given suboptimal arm only finitely many more times.

\begin{lemma}[Post-separation selected-pull budget]
\label{lem:gate_main}
After separation, the number of subsequent rounds on which DE selects arm $a$ via the gated override is at most
$N_a^{\mathrm{gate}} = \lceil L/(2\lambda\Delta_a)\rceil$.
\end{lemma}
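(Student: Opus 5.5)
The plan is to combine the gate condition with the second-moment bound~\eqref{eq:beta_ei}. After separation (Lemma~\ref{lem:separation_main}), the greedy host plays some arm other than $a$, so $a$ is a non-host action. In the greedy-host limit its surprisal is therefore exactly $L$, and $a$ can be selected by the override only on a round where $L\,\EI_t(a)\ge\lambda$, that is, $\EI_t(a)\ge\lambda/L$. The goal is to show that this inequality can hold on only boundedly many of the rounds on which $a$ is actually selected.

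\textbf{Step 1: lower bound on the posterior-mean gap.} After separation,
\[
v_t-m_a(t)\ \ge\ m_{a^\star}(t)-m_a(t)\ \ge\ \Delta_a/2 .
\]
The first inequality holds because $v_t=\max_b m_b(t)\ge m_{a^\star}(t)$; the second is Lemma~\ref{lem:separation_main}. This lemma applies at every later round, since its hypotheses $n_a(t)\ge N_a$ and $n_{a^\star}(t)\ge N_a$ remain true as counts only grow, and we stay on $\mathcal{E}_\delta$. Substituting into~\eqref{eq:beta_ei} gives
\[
\EI_t(a)\ \le\ \frac{1}{2(n_a(t)+3)\,\Delta_a}.
\]

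\textbf{Step 2: invert the gate condition.} Combining with $\EI_t(a)\ge\lambda/L$, any round on which $a$ passes the gate must satisfy
\[
n_a(t)+3\ \le\ \frac{L}{2\lambda\Delta_a}.
\]
So $a$ is gated out once its total pull count exceeds $L/(2\lambda\Delta_a)-3$.

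\textbf{Step 3: count the selections.} Each gated-override selection of $a$ increments $n_a$ by one. Host pulls of $a$ do not occur after separation, and pulls of $a$ via the host fallback inside $q_t^\lambda$ are also host actions, so they cannot be $a$. Hence the number of post-separation rounds on which $a$ is selected via the gate is at most the number of integer values $n\ge n_a(t_{\mathrm{sep}})$ with $n+3\le L/(2\lambda\Delta_a)$. This is bounded by $\lceil L/(2\lambda\Delta_a)\rceil$, which is a crude bound that also absorbs the $-3$ slack and the ceiling. The bound is deterministic given the event, and does not depend on $\varepsilon_t$ or on how often override rounds occur.

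\textbf{Main obstacle.} The delicate point is justifying that the separation conclusion persists on every post-separation round. This is what makes the budget apply over all subsequent time rather than only at one instant. It follows because the hypotheses of Lemma~\ref{lem:separation_main} are monotone in $t$ and $\mathcal{E}_\delta$ is a uniform-in-time concentration event. I would state this explicitly, and also check that the variance bound $1/(4(n_a+3))$ used in~\eqref{eq:beta_ei} holds for every Beta posterior reached.
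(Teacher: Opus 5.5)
Your proposal is correct and follows essentially the same route as the paper: separation gives $v_t - m_a(t)\ge\Delta_a/2$, the second-moment bound gives $\EI_t(a)\le 1/(2(n_a(t)+3)\Delta_a)$, the gate forces $\EI_t(a)\ge\lambda/L$, and each gated selection increments $n_a$, giving at most $\lceil L/(2\lambda\Delta_a)\rceil$ such selections. Your explicit justification that separation persists on every later round (counts only grow, and $\mathcal{E}_\delta$ is uniform in time) fills in a step the paper leaves implicit.
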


Together, these show that once a suboptimal arm is sufficiently resolved, the greedy host stops selecting it and the remaining number of override selections is finite.
The per-arm selected-pull budgets depend on the gap $\Delta_a$ but not on the number of arms $K$.

\subsection{Linear Bandits: Selected-Override Budget}
\label{sec:small_linear}

The same gating mechanism extends to structured posteriors, such as linear bandits~\citep{dani2008stochastic,abbasi2011improved}.
When arms share a $d$-dimensional feature representation, pulling one arm teaches the agent about many others.
The relevant question is no longer how many arms exist, but how much posterior uncertainty the gate can spend.

The result is an information-budget bound.
Gate-open selection implies $\EI_t(a)\ge\lambda/L$, while Gaussian expected improvement is at most the posterior standard deviation.
The elliptical potential lemma then bounds selected gated overrides by the linear information gain, independent of the number of arms.
Appendix~\ref{app:proofs_linear} gives the formal statement and proof.

\subsection{Fresh Arms Drop Out}
\label{sec:fresharms}

We now consider the large-world setting without forced initialization.

\begin{assumption}[Large-world Bernoulli setting]
\label{assump:large}
We consider a $K$-armed Bernoulli bandit with unknown means $\mu_a \in [0,1]$ and independent $\mathrm{Beta}(1,1)$ priors.
The host is greedy with respect to posterior means.
The override uses schedule $\varepsilon_t$, gate price $\lambda > 0$, and surprisal cap $L > 0$.
There is no forced initialization: arms may have zero pulls.
Let $a^\star$ denote an optimal arm and $\Delta_a := \mu_{a^\star} - \mu_a$.
\end{assumption}

The key structural question is whether fresh arms remain exploration-eligible independently of $K$.
For tried actions that have separated from the optimal arm, Section~\ref{sec:small_bernoulli} gives a finite selected-pull budget.
For untried actions, the gate criterion depends only on the current host baseline $v_t$.

\begin{proposition}[Pointwise fresh-arm shutoff]
\label{prop:budget}
Under Assumption~\ref{assump:large}, fix any round $t$.
Every untried arm $a$ satisfies
$\tilde{\delight}_t(a) \le L(1-v_t)^2/2$.
Hence if $v_t > v_{\mathrm{off}} := 1 - \sqrt{2\lambda/L}$, then no untried arm belongs to the gated set $\Gc_t$.
Moreover, if a tried arm $a$ has separated from the optimal arm, then its remaining number of selected override pulls is at most
$N_a^{\mathrm{gate}} = \lceil L/(2\lambda\Delta_a) \rceil$.
\end{proposition}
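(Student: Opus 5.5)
The plan is to treat the three claims separately. Each one reduces to facts already set up in Section~\ref{sec:bernoulli} and Section~\ref{sec:small_bernoulli}.

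\textbf{Pointwise bound.} Start with an untried arm $a$ at round $t$. Because $n_a=0$, its posterior is still the $\mathrm{Beta}(1,1)$ prior, so $f(a)\sim\mathrm{Uniform}(0,1)$ conditionally on $\Hc_t$. Independence of the arm priors is what guarantees that observations of other arms do not change this marginal. For $v_t\in[0,1]$ the computation in Section~\ref{sec:bernoulli} gives
\[
\EI_t(a)=\E[(f(a)-v_t)^+\mid\Hc_t]=\int_{v_t}^1(x-v_t)\,dx=(1-v_t)^2/2 .
\]
Note that $v_t$ is a maximum of posterior means of Beta variables, so it automatically lies in $[0,1]$. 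The surprisal is capped, so $\surp_t(a)\le L$ always; under the greedy host it is either $L$ or $0$. Multiplying gives $\tilde{\delight}_t(a)\le L(1-v_t)^2/2$.

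\textbf{Shutoff.} Suppose $v_t>v_{\mathrm{off}}=1-\sqrt{2\lambda/L}$, which is only meaningful when $\lambda<L/2$. Since $v_t\le 1$, we have $0\le 1-v_t<\sqrt{2\lambda/L}$, and squaring gives $L(1-v_t)^2/2<\lambda$. Hence $\tilde{\delight}_t(a)<\lambda$, and $a\notin\Gc_t$. In the case $\lambda\ge L/2$, the bound $L(1-v_t)^2/2\le L/2\le\lambda$ already excludes fresh arms, except possibly in the boundary case $v_t=0$ with equality. The stated hypothesis $v_t>v_{\mathrm{off}}$ covers this case whenever $v_{\mathrm{off}}$ is read literally, because $v_t\ge 0>v_{\mathrm{off}}$ forces the strict inequality whenever $\lambda>L/2$. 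I would add one line on the edge case $\lambda=L/2$, $v_t=0$. That case cannot actually occur, because posterior means are strictly positive, so $v_t>0$.

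\textbf{Separated tried arms.} For the final claim I would invoke Lemma~\ref{lem:gate_main} directly. I first need to check that its proof does not use the forced-initialization part of Assumption~\ref{assump:main}. That proof should depend only on the Beta posterior of arm $a$ itself once separation $m_{a^\star}-m_a\ge\Delta_a/2$ holds. It combines the second-moment bound~\eqref{eq:beta_ei} with $v_t\ge m_{a^\star}(t)$ to get $\EI_t(a)\le 1/(2(n_a+3)\Delta_a)$. Each gated selection requires $L\,\EI_t(a)\ge\lambda$, which forces $n_a+3\le L/(2\lambda\Delta_a)$. Every selection increments $n_a$, so the count is at most $\lceil L/(2\lambda\Delta_a)\rceil$.

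The main subtlety is whether separation persists once $n_a$ grows. The host mean $v_t\ge m_{a^\star}(t)$ can fluctuate, so I would state the claim as in the lemma, on rounds where separation holds. Applied to arm $a$, the lemma carries over verbatim, since arm $a$ has been tried and only its own posterior and the baseline enter the argument.
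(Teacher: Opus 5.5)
Your proposal is correct and follows the paper's proof: the untried-arm bound comes from the uniform posterior giving $\EI_t(a)=(1-v_t)^2/2$ with $\surp_t(a)\le L$ (Lemma~\ref{lem:untried}), and the separated-arm budget is the second-moment EI bound plus the counting argument of Lemma~\ref{lem:gate_count}, which, as you check, does not use forced initialization. Your separate case analysis for $\lambda\ge L/2$ is harmless but unnecessary, because the squaring step only needs $0\le 1-v_t<\sqrt{2\lambda/L}$, and that holds for every $\lambda>0$ under the hypothesis $v_t>v_{\mathrm{off}}$.
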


Because the algorithm uses $v_t$ rather than a running maximum, this exclusion is pointwise.
The fresh-arm gate can reopen if the host baseline drops.
This shutoff is qualitatively different from unpriced optimism, which keeps unresolved arms exploration-eligible while their uncertainty remains high.

\begin{theorem}[Fresh-arm shutoff versus optimism]
\label{thm:thermostat}
Under Assumption~\ref{assump:large}, fix $\lambda$ and $L$.
If $K \ge T$ and a UCB rule assigns $+\infty$ index to unpulled arms, it selects a previously untried arm on every round.
In contrast, under DE, on any round $t$ with $v_t > v_{\mathrm{off}}$, the gated set contains no untried arm.
\end{theorem}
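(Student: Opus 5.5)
The theorem has two independent claims, and I would prove each directly.

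\textbf{The UCB claim.} This is a counting argument. Suppose UCB assigns index $+\infty$ to every arm with $n_a(t)=0$ and a finite index to every pulled arm. Then on any round where an unpulled arm exists, the argmax is attained only at unpulled arms. Ties among them are broken arbitrarily, which does not matter. So the selected arm is previously untried. Before round $t\le T$, at most $t-1\le T-1$ arms have been pulled, one per round. Since $K\ge T$, at least $K-(t-1)\ge 1$ arms remain unpulled on every round $t\in\{1,\dots,T\}$. By induction on $t$, UCB therefore pulls a fresh arm on every round. The only care needed is the convention that a pulled arm's index is finite. I would state this explicitly, since it holds for standard UCB indices once $n_a\ge 1$.

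\textbf{The DE claim.} This is a direct consequence of Proposition~\ref{prop:budget}, but I would re-derive the key inequality to keep the argument self-contained. Take an untried arm $a$. Its posterior is the $\mathrm{Beta}(1,1)$ prior, so $f(a)\sim\mathrm{Uniform}(0,1)$ under $\Hc_t$. Independence of the arms' posteriors means its posterior has not been updated by pulls of other arms. For $v_t\in[0,1]$ this gives $\EI_t(a)=\E[(f(a)-v_t)^+]=(1-v_t)^2/2$. The baseline does lie in $[0,1]$, because $v_t$ is a maximum of posterior means of $[0,1]$-valued quantities. The capped surprisal satisfies $\surp_t(a)\le L$, so
\[
\tilde{\delight}_t(a)\le L(1-v_t)^2/2 .
\]

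Now suppose $v_t>v_{\mathrm{off}}=1-\sqrt{2\lambda/L}$. Consider first the case $\lambda<L/2$. Here $v_{\mathrm{off}}\in(0,1)$, so $0\le 1-v_t<\sqrt{2\lambda/L}$. Squaring gives $L(1-v_t)^2/2<\lambda$. Hence $\tilde{\delight}_t(a)<\lambda$ and $a\notin\Gc_t$. In the case $\lambda\ge L/2$, the bound $L(1-v_t)^2/2\le L/2\le\lambda$ already holds for every $v_t\in[0,1]$. Equality occurs only when $v_t=0$ and $\lambda=L/2$. I would dispatch that boundary case using the strict inequality $v_t>v_{\mathrm{off}}=0$, which forces $1-v_t<1$.

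\textbf{Main obstacle.} The main obstacle is bookkeeping rather than depth. One point is justifying that "untried" means the posterior equals the prior, which relies on independent priors. The other is handling the boundary regime $\lambda\ge L/2$, where the formula for $v_{\mathrm{off}}$ is not the unique-root characterization. For the latter I would appeal to the generalized-inverse definition~\eqref{eq:general_reservation}. Finally, I would note that the DE conclusion holds pointwise in $t$, with no dependence on $K$. Contrasting this with the UCB claim gives the theorem.
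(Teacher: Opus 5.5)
Your proof is correct and follows the paper's route: the UCB half is the same ``$+\infty$ beats any finite index'' observation (you make explicit the count $K-(t-1)\ge 1$ that the paper leaves implicit), and the DE half is exactly Lemma~\ref{lem:untried} via Proposition~\ref{prop:budget}, namely $\EI_t(a)=(1-v_t)^2/2$ together with $\surp_t(a)\le L$. Your case split on $\lambda$ versus $L/2$ and the appeal to the generalized inverse are unnecessary, since $0\le 1-v_t<\sqrt{2\lambda/L}$ already squares to $L(1-v_t)^2/2<\lambda$ for every $\lambda>0$; for $\lambda>L/2$ the generalized inverse would in fact give a different (though still nonpositive) threshold than the $v_{\mathrm{off}}$ in the statement.
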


The distinction is again AND versus OR.
Optimistic methods keep actions eligible when uncertainty alone is high, so the exploratory set remains broad whenever many actions are unresolved.
DE keeps actions override-eligible only when both posterior upside and surprisal are high enough.
When the host baseline is high enough, no untried arm has enough of both, so the gated set excludes them all on that round.

\subsection{Large-World Reservation Rates}
\label{sec:tail_scaling}

The reservation view also suggests how to choose the price from the horizon.
The cleanest setting is an infinite reservoir of fresh arms.
Each fresh arm has a revealed value $X\sim F$ supported on $[0,1]$.
Regret is measured relative to the upper endpoint:
$R_T = \E\sum_{t=1}^T (1-X_{A_t})$.
Let $p(y):=\Pr(X\ge 1-y)$.
A reservation policy with threshold $1-y$ samples fresh arms until one exceeds $1-y$, then exploits the best observed arm.
Its regret is bounded by
\begin{equation}
\label{eq:threshold_regret}
  R_T(y) \le \frac{1}{p(y)} + Ty.
\end{equation}
The corresponding DE price is
\begin{equation}
\label{eq:tail_price}
  \lambda_y = L\,\E[(X-(1-y))^+] = L\int_0^y p(s)\,ds.
\end{equation}

\begin{theorem}[Prior-tail rate for horizon-priced reservation search]
\label{thm:tail_rate}
Assume the upper tail of $F$ is polynomial: for some $\alpha>0$ and constants $0<c_-\le c_+<\infty$,
$c_-y^\alpha \le \Pr(X\ge 1-y) \le c_+y^\alpha$
for all sufficiently small $y$.
Then the reservation policy with $y_T \asymp T^{-1/(\alpha+1)}$ achieves
$R_T = O(T^{\alpha/(\alpha+1)})$.
No policy in the unthrottled infinite-arm discovery model improves this rate in order.
The corresponding DE price obeys $\lambda_T = \Theta(L/T)$.
With constant override rate $\varepsilon$, the same upper-bound argument gives
$R_T = O(\varepsilon^{-1/(\alpha+1)}T^{\alpha/(\alpha+1)})$
and price $\lambda_T = \Theta(L/(\varepsilon T))$.
\end{theorem}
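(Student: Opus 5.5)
The plan is to treat the theorem as four separate claims and handle each in turn.

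\textbf{Upper bound.} We plug the tail assumption into the threshold bound~\eqref{eq:threshold_regret}. For small $y$ we have $p(y)\ge c_-y^\alpha$, so
\[
R_T(y)\le c_-^{-1}y^{-\alpha}+Ty .
\]
Choosing $y_T=T^{-1/(\alpha+1)}$ balances the two terms, and each becomes $O(T^{\alpha/(\alpha+1)})$. We need $T$ large enough that $y_T$ lies in the range where the tail bounds hold; smaller $T$ only add a constant. Before using~\eqref{eq:threshold_regret}, we re-derive it briefly. The number of fresh samples until the first success is geometric with mean $1/p(y)$, and each such round costs at most $1$ in regret. Afterwards the agent exploits an arm with value at least $1-y$, so every later round costs at most $y$.

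\textbf{Price.} We integrate the tail in~\eqref{eq:tail_price}. For small $y$,
\[
\int_0^y p(s)\,ds=\Theta\!\left(y^{\alpha+1}/(\alpha+1)\right),
\]
with constants $c_\pm$. At $y=y_T$ this gives $\Theta(1/T)$, so $\lambda_T=\Theta(L/T)$. By Proposition~\ref{prop:pandora_gate}, specialized to the fresh-arm threshold~\eqref{eq:general_reservation}, a gate with this price makes fresh arms eligible exactly while the baseline is below $1-y_T$. Hence DE implements the reservation policy.

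\textbf{Lower bound.} This is the main obstacle, since it must cover every adaptive policy in the discovery model. We argue by cases. Fix any policy and set $y=cT^{-1/(\alpha+1)}$, with $c$ small. Let $\tau$ be the first round on which the policy has revealed an arm with value at least $1-y$.

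\emph{Case 1.} Every round before $\tau$ costs at least $y$. If $\Pr(\tau>T/2)\ge 1/2$, then the regret is at least $(T/4)y=\Omega(T^{\alpha/(\alpha+1)})$.

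\emph{Case 2.} Otherwise, with probability at least $1/2$, the policy finds such an arm within $T/2$ rounds. Finding one requires many draws, because each fresh draw succeeds with probability at most $c_+y^\alpha$. By Wald's identity or a union bound, the probability of success within $n$ fresh draws is at most $nc_+y^\alpha$. So reaching probability $1/2$ forces $\E[\#\text{fresh draws}]\gtrsim y^{-\alpha}$. The delicate point is to lower-bound the cost of these fresh draws, since a draw may return a moderately good arm. A fresh draw costs $\E[1-X]$, which is a positive constant whenever $F$ is not a point mass at $1$. The tail condition with $\alpha>0$ rules that out, so $\E[1-X]=\kappa>0$. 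The regret is therefore at least $\kappa\,\E[\#\text{fresh draws}]\gtrsim y^{-\alpha}=\Omega(T^{\alpha/(\alpha+1)})$.

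There is a subtlety in Case 2: the number of fresh draws is a stopping-time quantity, so the bound on it must hold in expectation. Wald's identity handles this, as does a Markov-type bound on $\Pr(\text{success by } n)$ combined with a split at $n\asymp y^{-\alpha}$.

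\textbf{Constant override rate.} With override rate $\varepsilon$, only about $\varepsilon t$ of the first $t$ rounds can inspect fresh arms. The discovery phase therefore lasts in expectation $1/(\varepsilon p(y))$ rounds. The bound becomes
\[
R_T\le \frac{1}{\varepsilon p(y)}+Ty .
\]
Balancing the terms gives $y=(\varepsilon T)^{-1/(\alpha+1)}$ and
\[
R_T=O\!\left(\varepsilon^{-1/(\alpha+1)}T^{\alpha/(\alpha+1)}\right).
\]
Host rounds during the discovery phase cost at most $1$ each, which is already counted. The price calculation at this $y$ gives $\lambda=\Theta(L/(\varepsilon T))$.
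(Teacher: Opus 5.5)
Your proposal is correct and follows the paper's proof: the upper bound optimizes $1/p(y)+Ty$, the price comes from integrating the tail, and the lower bound uses the same union-bound/Wald dichotomy between many fresh inspections and missing every $1-y$ arm (the paper splits on whether $\E[N_T]\ge 1/(4p(y))$, you split on $\Pr(\tau>T/2)$, which is the contrapositive form of the same argument). Your extra claim that DE implements the reservation policy via Proposition~\ref{prop:pandora_gate} is not needed and slightly overstated, since the theorem only asserts the price scaling.
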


\noindent\textit{Proof sketch.}
The upper bound optimizes $1/p(y)+Ty$.
The lower bound is the matching search tradeoff: either a policy samples $\Omega(1/p(y))$ fresh arms, or it misses a $1-y$ arm with constant probability and pays $\Omega(Ty)$ regret.
Appendix~\ref{app:tail_scaling} gives the proof.

For $X\sim\mathrm{Uniform}(0,1)$, the unthrottled horizon price is $\lambda_T=L/(2T)$ and the optimal rate is $\Theta(\sqrt{T})$.

This theorem calibrates prices when inspections are available.
The host--override algorithm has two knobs: $\lambda$ controls which arms are eligible, and $\varepsilon_t$ controls how often eligible arms are inspected.
For tail exponent $\alpha$, the prior-tail rate requires $\Theta(T^{\alpha/(\alpha+1)})$ fresh inspections.
The default schedule gives only $O(M\log T)$ override opportunities asymptotically.
Thus the fixed-price experiments should be read as satisficing search, not asymptotic search.

\subsection{Fixed-History Advantage over $\varepsilon$-Greedy}
\label{sec:dominance}

DE and $\varepsilon$-greedy share the same host, schedule, and override probability.
They differ only in the override distribution.
When the gate is empty, DE plays the host while $\varepsilon$-greedy draws uniformly.
When the gate is open, DE draws only from actions with $\EI_t(a)\ge\lambda/L$.
As arms are resolved, their expected improvement collapses.
$\varepsilon$-greedy continues to average over them; DE excludes them from the gate.
Appendix~\ref{app:fixed_history} gives the fixed-history comparison.

The gate is not cosmetic.
Appendix~\ref{app:necessity} gives simple examples showing that pure greedy can lock in, persistent unpriced overrides can bleed regret, and fixed-temperature hosts can fail to commit.
The host must be cold enough to commit, and the override must be priced enough to shut off.

\section{Limitations and Open Problems}
\label{sec:limits}

The structural results bound the gate, not the host's regret.
For rewards in $[0,1]$, if DE is paired with a host whose guarantee is stable under the additional observations induced by the override, then the extra regret from override actions is at most $\sum_{t=1}^T\varepsilon_t=O(M\log(1+T/M))$.
Formalizing this inheritance requires a coupling argument.

The greedy-host variant remains open.
Because DE uses the current baseline $v_t$ rather than a running maximum, the gate can reopen.
We do not currently have a worst-case regret bound or a matching impossibility result.
Empirically, lock-in appears rare under i.i.d.\ $\mathrm{Uniform}(0,1)$ priors, but harder priors can shut off exploration prematurely.
This is the central tradeoff of fixed-price search.

The Pandora equivalence is exact only in the revealed-value model.
In noisy bandits, DE uses expected improvement as a value-of-perfect-information proxy.
For independent arms, this proxy upper bounds one-step knowledge gradient (Appendix~\ref{app:kg_ei}); for correlated posteriors and linear bandits, the relationship is more subtle.

A fixed price implements satisficing search.
A horizon-dependent price implements Bayes-search calibration.
In the reservoir model, the regret-oriented price scales as $\Theta(L/T)$ without throttling and $\Theta(L/(\varepsilon T))$ with constant override rate $\varepsilon$.
That price alone is not enough.
The prior-tail rate also requires enough inspection opportunities.
The default schedule supplies only $O(M\log T)$ override opportunities asymptotically, so the theorem calibrates prices rather than proving a guarantee for the fixed experimental schedule.

Algorithm~\ref{alg:explore} scores every action.
In large or continuous action spaces, the gate can be applied to a finite candidate set proposed by the host, a posterior sampler, or a retrieval system.
The structural results here analyze the finite-candidate case.

\section{Related Work}
\label{sec:related}

Classical bandit algorithms and their MDP counterparts explore broadly until uncertainty is resolved~\citep{auer2002finite,thompson1933likelihood,agrawal2012analysis,russo2014ids,jaksch2010near,strens2000bayesian,osband2013more}.
DE targets the unresolved regime where broad search cannot be amortized.

Satisficing Thompson sampling relaxes the target when exact optimality is too expensive~\citep{russo2022satisficing}.
Quantile regret and continuum-armed bandits study related regimes where full resolution is impossible~\citep{lattimore2024quantile,bubeck2011xarmed}.
DE keeps the original reward objective but prices exploratory deviation from the host.
Algorithmically, DE is closest to $\varepsilon$-greedy, but it restricts the override to actions whose prospective delight exceeds the gate price.
Expected improvement is usually a standalone acquisition function in Bayesian optimization~\citep{mockus1978application,jones1998efficient}.
DE instead uses EI as a priced eligibility test inside a host--override policy, with fallback to the host when no action clears the gate.
This differs from knowledge-gradient methods~\citep{frazier2008knowledge}, which choose the action maximizing one-step value of information.
DE uses EI as a cheaper proxy and bounds its total expenditure through the gate price.

Weitzman's Pandora rule gives the optimal policy for costly revealed-value search~\citep{weitzman1979optimal}.
DE recovers Pandora eligibility with effective cost $c=\lambda/\surp$.
The exact Pandora rule orders revealed-value inspections by reservation value.
DE uses the same stopping logic inside a host--override policy with noisy observations, using posterior EI as a value-of-perfect-information proxy.

This paper is the acting-side mirror of the Delightful Policy Gradient family~\citep{osband2026delightfulpolicygradient,osband2026doesgradientsparkjoy,osband2026delightfuldistributedpolicygradient}.
The shared principle is to allocate a scarce resource to beneficial surprise: plasticity on the learning side, exploratory pulls on the acting side.

\section{Conclusion}
\label{sec:conclusion}

DE replaces the blind $\varepsilon$-greedy override with a priced gate on prospective delight.
The gate is not arbitrary: it is the Pandora reservation condition for costly search, with surprisal setting the effective inspection cost.
A fixed price gives satisficing exploration; a horizon price gives the optimal prior-tail rate in the revealed-value discovery model.
Experiments across Bernoulli bandits, linear bandits, and tabular MDPs show that DE scales well while Thompson Sampling and $\varepsilon$-greedy degrade, with the same hyperparameters across all three.

A full regret bound for the greedy host remains open; the experiments point toward Bayes regret as the natural formal target.
The broader lesson is price before breadth.
In unresolved regimes, uncertainty alone is not a reason to explore; uncertainty with priced upside is.

\newpage
\bibliography{references}
\bibliographystyle{plainnat}

\newpage
\appendix

\section{Bernoulli Technical Lemmas}
\label{app:lemmas}

This section collects the technical lemmas used in the Bernoulli proofs for Sections~\ref{app:proofs_small} and~\ref{app:proofs_large}.
Throughout, we work under Assumption~\ref{assump:main} for small-world results and Assumption~\ref{assump:large} for large-world results.

The delight score of an untried arm is controlled by the current host baseline $v_t$.

\begin{lemma}[Untried-arm delight]
\label{lem:untried}
Under a $\mathrm{Beta}(1,1)$ prior, an untried arm satisfies $\EI_t(a) = (1-v_t)^2/2$ and $\surp_t(a) \le L$.
Hence $\tilde{\delight}_t(a) \le L(1-v_t)^2/2$.
In particular, if $v_t > v_{\mathrm{off}} := 1 - \sqrt{2\lambda/L}$, then no untried arm lies in $\Gc_t$.
\end{lemma}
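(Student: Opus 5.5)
The plan is to compute the expected improvement of an untried arm directly from its prior, then bound the surprisal by the cap. An untried arm $a$ has $n_a=0$. Under independent $\mathrm{Beta}(1,1)$ priors, its posterior at round $t$ is still $\mathrm{Beta}(1,1)=\mathrm{Uniform}(0,1)$. Independence across arms matters here: observations of other arms do not update $f(a)$.

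The baseline $v_t=\max_b m_b(t)$ is $\Hc_t$-measurable, so conditioning on $\Hc_t$ treats it as a constant. Every posterior mean lies in $[0,1]$, and in fact in $(0,1)$ since $m_b=(1+S_b)/(2+n_b)$, so $v_t\in[0,1]$. For $U\sim\mathrm{Uniform}(0,1)$ and $v\in[0,1]$ we then get
\[
\E[(U-v)^+]=\int_v^1(u-v)\,du=(1-v)^2/2 .
\]
This gives $\EI_t(a)=(1-v_t)^2/2$. The same identity was already used in Section~\ref{sec:bernoulli} to derive the fresh-arm threshold.

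For the surprisal, the definition $\surp_t(a)=\min\{[\cdot]_+,L\}$ gives $\surp_t(a)\le L$ immediately. Since $\EI_t(a)\ge 0$, multiplying the two facts gives $\tilde{\delight}_t(a)=\EI_t(a)\surp_t(a)\le L(1-v_t)^2/2$.

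For the shutoff claim, I would treat two cases.
\begin{itemize}
\item If $\lambda<L/2$, then $v_{\mathrm{off}}=1-\sqrt{2\lambda/L}\in(0,1)$. For $v_t>v_{\mathrm{off}}$ with $v_t\le 1$, we have $0\le 1-v_t<\sqrt{2\lambda/L}$. Squaring, which is legitimate because both sides are nonnegative, gives $L(1-v_t)^2/2<\lambda$. Hence $\tilde{\delight}_t(a)<\lambda$ and $a\notin\Gc_t$.
\item If $\lambda\ge L/2$, the formula for $v_{\mathrm{off}}$ is read literally as $1-\sqrt{2\lambda/L}\le 0$ (or via the generalized threshold~\eqref{eq:general_reservation}). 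The hypothesis $v_t>v_{\mathrm{off}}$ then allows $v_t$ close to $0$. But $v_t>0$ always holds, so $L(1-v_t)^2/2<L/2\le\lambda$, and the gate is again closed.
\end{itemize}

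The only real subtlety is this boundary handling: keeping $v_t\in(0,1]$ so the square root comparison is valid, and covering the regime $\lambda\ge L/2$. No earlier lemma beyond the uniform-prior integral is needed.
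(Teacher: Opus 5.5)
Your proposal is correct and matches the paper's proof. The untried arm's posterior is still the uniform prior, so $\EI_t(a)=(1-v_t)^2/2$, the cap gives $\surp_t(a)\le L$, and $v_t>v_{\mathrm{off}}$ forces $L(1-v_t)^2/2<\lambda$. Your extra boundary care ($v_t\in(0,1)$ and the regime $\lambda\ge L/2$) is sound but not needed: whenever $v_t\le 1$, the hypothesis already gives $0\le 1-v_t<\sqrt{2\lambda/L}$, so the squaring step works in both cases without splitting.
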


\begin{proof}
Since the arm is untried, its posterior is the prior $\mathrm{Beta}(1,1) = \mathrm{Uniform}(0,1)$.
For $X \sim \mathrm{Uniform}(0,1)$: $\EI_t(a) = \E[(X-v_t)^+] = \int_{v_t}^1 (x-v_t)\,dx = (1-v_t)^2/2$.
Under a greedy host, $\surp_t(a) \le L$, so $\tilde{\delight}_t(a) \le L(1-v_t)^2/2$.
If $v_t > v_{\mathrm{off}}$, then $L(1-v_t)^2/2 < \lambda$, so $\tilde{\delight}_t(a) < \lambda$ for every untried arm.
\end{proof}

The Beta posterior mean is close to the empirical mean.
The gap is exactly computable.

\begin{lemma}[Posterior--empirical deviation]
\label{lem:shrinkage}
For a $\mathrm{Beta}(1+S, 1+n-S)$ posterior with $n$ observations and $S$ successes:
\[
  |m_a(t) - \hat\mu_a(t)| \le \frac{1}{n+2}.
\]
\end{lemma}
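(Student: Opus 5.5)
The plan is a direct computation. First I write the Beta posterior mean and the empirical mean in closed form. The posterior mean is $m_a(t)=(1+S)/(n+2)$, and the empirical mean is $\hat\mu_a(t)=S/n$ for $n\ge1$. When $n=0$ there is no empirical mean; I will adopt a convention such as $\hat\mu_a=1/2$, or simply assume $n\ge1$, since that is how the lemma is used under Assumption~\ref{assump:main} after the cold start.

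Next I put the difference over a common denominator:
\[
  m_a(t)-\hat\mu_a(t)=\frac{n(1+S)-S(n+2)}{n(n+2)}=\frac{n-2S}{n(n+2)}.
\]
This is the exact expression the remark before the lemma refers to. It can also be read as the shrinkage identity
\[
  m_a(t)=\frac{n}{n+2}\,\hat\mu_a(t)+\frac{2}{n+2}\cdot\frac12,
  \qquad\text{so}\qquad
  m_a(t)-\hat\mu_a(t)=\frac{2}{n+2}\bigl(\tfrac12-\hat\mu_a(t)\bigr).
\]

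Finally I bound this quantity. Since $0\le S\le n$, we have $|n-2S|\le n$, which gives
\[
  |m_a(t)-\hat\mu_a(t)|\le \frac{n}{n(n+2)}=\frac{1}{n+2}.
\]
Equivalently, $|\tfrac12-\hat\mu_a|\le\tfrac12$ in the shrinkage form gives the same $1/(n+2)$. The bound is attained at $S\in\{0,n\}$, so it is sharp.

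There is no real obstacle. The only point needing care is the $n=0$ edge case, where $\hat\mu_a$ must be defined by convention; with $\hat\mu_a:=1/2$ the bound holds trivially, since $m_a=1/2$ as well.
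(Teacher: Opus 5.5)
Your proof is correct and matches the paper's: both compute $m_a(t)-\hat\mu_a(t)=(n-2S)/(n(n+2))$ for $n\ge 1$ and then use $|n-2S|\le n$. The only difference is the $n=0$ case, where the paper takes $\hat\mu_a=0$ and gets $|m_a-\hat\mu_a|=1/2=1/(0+2)$; the bound in fact holds for any convention $\hat\mu_a\in[0,1]$, including your choice of $1/2$.
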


\begin{proof}
When $n = 0$: $m = 1/2$ and $\hat\mu = 0$, so the bound is $1/2$.
When $n \ge 1$: $m - \hat\mu = (n - 2S)/(n(n+2))$.
Since $0 \le S \le n$, we have $|n - 2S| \le n$, so $|m - \hat\mu| \le 1/(n+2)$.
\end{proof}

Empirical means concentrate around true means uniformly over all arms and all rounds.

\begin{lemma}[Uniform concentration]
\label{lem:concentration}
Let $\hat\mu_{a,n}$ be the empirical mean of the first $n$ observations from arm $a$.
Define the good event
\[
  \mathcal{E}_\delta
  \;:=\;
  \bigcap_{a=1}^K \bigcap_{n=1}^T
  \left\{
    |\hat\mu_{a,n} - \mu_a|
    \le
    \sqrt{\frac{\log(2KT/\delta)}{2n}}
  \right\}.
\]
Then $\Pr(\mathcal{E}_\delta) \ge 1 - \delta$.
\end{lemma}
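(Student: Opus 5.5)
The plan is to apply Hoeffding's inequality to each fixed pair $(a,n)$ and then take a union bound over the $KT$ pairs. The key is to index the concentration by the sample count $n$ rather than by the round $t$. This sidesteps the fact that the number of pulls of an arm at a given round is random and depends on the adaptive policy.

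\textbf{Step 1: a reward-table coupling.} I would use the standard construction in which, for each arm $a$, an i.i.d.\ sequence $Y_{a,1},Y_{a,2},\dots$ of $\mathrm{Bernoulli}(\mu_a)$ variables is drawn in advance. The $n$-th pull of arm $a$ returns $Y_{a,n}$. Under this coupling the law of the interaction is unchanged, and $\hat\mu_{a,n}=\frac1n\sum_{i=1}^n Y_{a,i}$ is an average of $n$ i.i.d.\ $[0,1]$-valued variables. This holds regardless of which policy (DE, host, or any override schedule) generated the pulls. This is the only conceptually delicate step, since it is what justifies a fixed-$n$ Hoeffding bound despite adaptive sampling. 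If the first $n$ observations are never collected, the event can be interpreted on the table itself; the bound holds either way.

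\textbf{Step 2: Hoeffding for a fixed pair.} Fix $a\in\{1,\dots,K\}$ and $n\in\{1,\dots,T\}$, and set $\epsilon_n=\sqrt{\log(2KT/\delta)/(2n)}$. Hoeffding's inequality gives
\[
\Pr\bigl(|\hat\mu_{a,n}-\mu_a|>\epsilon_n\bigr)\le 2\exp(-2n\epsilon_n^2)=2\cdot\frac{\delta}{2KT}=\frac{\delta}{KT}.
\]

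\textbf{Step 3: union bound.} There are $KT$ pairs $(a,n)$, so the complement of $\mathcal{E}_\delta$ has probability at most $KT\cdot\delta/(KT)=\delta$. Hence $\Pr(\mathcal{E}_\delta)\ge 1-\delta$.

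The argument has no real obstacle beyond stating the coupling in Step 1 correctly. The rest is a direct calculation, because the constant $2KT/\delta$ inside the logarithm is chosen so that each failure probability is exactly $\delta/(KT)$.
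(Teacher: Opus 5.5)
Your proposal is correct and follows the paper's proof: Hoeffding for each fixed pair $(a,n)$ gives failure probability $\delta/(KT)$, and a union bound over the $KT$ pairs finishes. Your reward-table coupling makes explicit why a fixed-$n$ Hoeffding bound is valid under adaptive sampling, which the paper leaves implicit; it only remarks afterwards that $\hat\mu_a(t)=\hat\mu_{a,n_a(t)}$.
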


\begin{proof}
Hoeffding's inequality gives failure probability at most $\delta/(KT)$ per $(a,n)$ pair.
A union bound over all $KT$ pairs gives $\Pr(\mathcal{E}_\delta^c) \le \delta$.
For any round $t$, $\hat\mu_a(t) = \hat\mu_{a,n_a(t)}$, so the bound holds simultaneously for all arms and rounds.
\end{proof}

After enough data, the greedy host permanently identifies the best arm.

\begin{lemma}[Arm separation]
\label{lem:separation}
Let $N_a := \lceil 64\Delta_a^{-2}\log(2KT/\delta)\rceil$.
On $\mathcal{E}_\delta$, if $n_a(t) \ge N_a$ and $n_{a^\star}(t) \ge N_a$, then $m_{a^\star}(t) - m_a(t) \ge \Delta_a/2$.
\end{lemma}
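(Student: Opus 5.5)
The plan is a triangle-inequality argument. It combines the posterior--empirical deviation bound of Lemma~\ref{lem:shrinkage} with the uniform concentration event of Lemma~\ref{lem:concentration}. Write $n=n_a(t)$ and $n^\star=n_{a^\star}(t)$, both at least $N_a$ and at most $T$, so the event $\mathcal{E}_\delta$ applies to both counts. Assume $\Delta_a>0$; otherwise $N_a$ is undefined and there is nothing to prove. Set $\beta:=\log(2KT/\delta)$.

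First, for any arm $b$ with $n_b\ge 1$ pulls, on $\mathcal{E}_\delta$ we have
\[
|m_b(t)-\mu_b| \le |m_b(t)-\hat\mu_b(t)| + |\hat\mu_b(t)-\mu_b| \le \frac{1}{n_b+2} + \sqrt{\frac{\beta}{2n_b}} =: w(n_b).
\]
The function $w$ is decreasing in $n_b$. Applying this to $a$ and to $a^\star$ gives
\[
m_{a^\star}(t)-m_a(t) \ge \mu_{a^\star}-\mu_a - w(n^\star)-w(n) \ge \Delta_a - 2w(N_a).
\]
It therefore suffices to show $w(N_a)\le \Delta_a/4$.

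Second, bound the two terms of $w(N_a)$ separately, using $N_a\ge 64\Delta_a^{-2}\beta$. The concentration term satisfies $\sqrt{\beta/(2N_a)}\le \sqrt{\Delta_a^2/128}=\Delta_a/(8\sqrt2)<0.09\,\Delta_a$. For the shrinkage term, $1/(N_a+2)\le 1/N_a\le \Delta_a^2/(64\beta)$. Here I use $K,T\ge1$ and $\delta\le 1$, so $\beta\ge\log 2>0.69$. I also use $\Delta_a\le 1$, which holds because the means lie in $[0,1]$. Together these give $1/(N_a+2)\le \Delta_a/44<0.023\,\Delta_a$. Summing the two terms yields $w(N_a)<0.12\,\Delta_a\le\Delta_a/4$, which proves the claim.

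The only step needing care is the shrinkage term $1/(n+2)$. It is not of order $\Delta_a\sqrt{1/\beta}$ automatically, so bounding it requires the lower bound $\beta\ge\log 2$ together with $\Delta_a\le1$, which converts $\Delta_a^2$ into $\Delta_a$. Everything else is routine. The constant $64$ leaves ample slack, so no sharper inequality is needed.
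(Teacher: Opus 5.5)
Your proof is correct and follows essentially the same route as the paper: you bound $|m_b(t)-\mu_b|\le\Delta_a/4$ for both $a$ and $a^\star$ by combining the concentration event with the shrinkage bound $1/(n+2)$, and then apply the triangle inequality. You also justify the shrinkage term explicitly via $\log(2KT/\delta)\ge\log 2$ and $\Delta_a\le 1$, a step the paper leaves implicit when it asserts $1/(N_a+2)\le\Delta_a/8$.
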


\begin{proof}
The argument combines concentration, shrinkage, and the triangle inequality.

\paragraph{Concentration.}
On $\mathcal{E}_\delta$ with $n \ge N_a$:
by substitution, $|\hat\mu_a - \mu_a| \le \Delta_a/8$.

\paragraph{Shrinkage.}
By Lemma~\ref{lem:shrinkage}: $|m_a - \hat\mu_a| \le 1/(N_a + 2) \le \Delta_a/8$.

\paragraph{Combine.}
The triangle inequality gives $|m_a - \mu_a| \le \Delta_a/4$ for both $a$ and $a^\star$.
Then $m_{a^\star} - m_a \ge (\mu_{a^\star} - \mu_a) - \Delta_a/2 = \Delta_a/2$.
\end{proof}

After separation, the expected improvement of a suboptimal arm decays with each pull.

\begin{lemma}[EI shut-off via second-moment bound]
\label{lem:ei_shutoff}
If arm $a$ is separated ($v_t - m_a(t) \ge \Delta_a/2$), then:
\[
  \EI_t(a) \;\le\; \frac{1}{2(n_a(t)+3)\,\Delta_a}.
\]
\end{lemma}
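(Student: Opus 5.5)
The plan is to apply the one-sided second-moment bound~\eqref{eq:ei_second_moment} directly to the posterior of arm $a$, exactly as in the derivation of~\eqref{eq:beta_ei}, and then substitute the separation hypothesis. Under the posterior $\mathrm{Beta}(1+S_a,1+n_a-S_a)$ the random variable $X=f(a)$ has mean $m_a(t)$ and variance $\sigma^2=\alpha\beta/((\alpha+\beta)^2(\alpha+\beta+1))$, where $\alpha+\beta=n_a+2$.

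The argument has three steps.

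\emph{Step 1: bound the posterior variance.} Since $\alpha\beta/(\alpha+\beta)^2\le 1/4$, we get $\sigma^2\le 1/(4(n_a+3))$. This is the bound already quoted before~\eqref{eq:beta_ei}.

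\emph{Step 2: check the hypothesis of~\eqref{eq:ei_second_moment}.} That bound needs $v=v_t>m=m_a(t)$. Separation gives $v_t-m_a(t)\ge\Delta_a/2>0$ for a suboptimal arm with $\Delta_a>0$, so the hypothesis holds. Applying it,
\[
\EI_t(a)=\E[(f(a)-v_t)^+\mid\Hc_t]\le\frac{\sigma^2}{v_t-m_a(t)}\le\frac{1}{4(n_a+3)}\cdot\frac{2}{\Delta_a}=\frac{1}{2(n_a+3)\Delta_a}.
\]

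\emph{Step 3: interpret the separation hypothesis.} The hypothesis is stated in terms of $v_t$, not in terms of $m_{a^\star}$ as in Lemma~\ref{lem:separation}. Since $v_t=\max_b m_b(t)\ge m_{a^\star}(t)$, the conclusion of Lemma~\ref{lem:separation} implies the hypothesis used here. The lemma as stated takes the hypothesis directly, so nothing further is needed.

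There is no real obstacle. The only points to handle carefully are the strict inequality $v_t>m_a(t)$ required by~\eqref{eq:ei_second_moment}, which holds because $\Delta_a>0$, and the bound $\alpha\beta\le(\alpha+\beta)^2/4$ in the variance step.
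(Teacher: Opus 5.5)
Your proposal is correct and follows essentially the same route as the paper's proof. Both apply the one-sided second-moment bound $\E[(X-v)^+]\le\sigma^2/(v-m)$, then the Beta variance bound $\sigma^2\le 1/(4(n_a+3))$, then the separation hypothesis $v_t-m_a(t)\ge\Delta_a/2$. Your explicit check that $\Delta_a>0$ guarantees the strict inequality $v_t>m_a(t)$ is a small point that the paper leaves implicit.
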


\begin{proof}
For any random variable $X$ with mean $m$ and variance $\sigma^2$, and $v > m$:
on the event $X \ge v$, $X - v \le X - m \le (X-m)^2/(v-m)$.
Taking expectations gives $\E[(X-v)^+] \le \sigma^2/(v-m)$.
The Beta posterior variance satisfies $\sigma^2 \le 1/(4(n+3))$.
Under separation, $v_t - m_a(t) \ge \Delta_a/2$, so $\EI_t(a) \le 1/(2(n_a(t)+3)\Delta_a)$.
\end{proof}

Once an arm is separated, DE can select it via the override only finitely many more times, with a count independent of $K$.

\begin{lemma}[Selected-pull shut-off count]
\label{lem:gate_count}
In the separated regime, any round that selects arm $a$ via the gated override must satisfy $n_a(t) \le L/(2\lambda\Delta_a) - 3$.
Consequently, the number of post-separation override selections of arm $a$ is at most $N_a^{\mathrm{gate}} := \lceil L/(2\lambda\Delta_a)\rceil$.
\end{lemma}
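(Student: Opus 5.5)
The plan is to chain the gate condition with the separated expected-improvement bound of Lemma~\ref{lem:ei_shutoff}. Suppose that on some post-separation round $t$ the override selects arm $a$. Since $q_t^\lambda$ is supported on $\Gc_t$, the arm must satisfy $\tilde{\delight}_t(a)=\EI_t(a)\surp_t(a)\ge\lambda$. The surprisal is capped at $L$; under the greedy host every non-host action has surprisal exactly $L$. Hence membership in the gated set forces $\EI_t(a)\ge\lambda/L$.

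Next we use separation. On such a round $v_t-m_a(t)\ge\Delta_a/2>0$, so Lemma~\ref{lem:ei_shutoff} applies and gives $\EI_t(a)\le 1/(2(n_a(t)+3)\Delta_a)$. Combining the two bounds yields
\[
\frac{\lambda}{L}\le\frac{1}{2(n_a(t)+3)\Delta_a},
\]
which rearranges to $n_a(t)+3\le L/(2\lambda\Delta_a)$. This is the first claim: a necessary condition for any gated selection of $a$ in the separated regime.

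For the counting claim, observe that $n_a(t)$ is nondecreasing in $t$ and increases by exactly one on every round in which $a$ is played, in particular on every override selection of $a$. List the post-separation override selections of $a$ at rounds $t_1<t_2<\dots<t_k$. Then $n_a(t_j)\ge n_a(t_1)+(j-1)\ge j-1$. Applying the necessary condition at $t_k$ gives $k-1\le L/(2\lambda\Delta_a)-3$, so $k\le L/(2\lambda\Delta_a)-2\le\lceil L/(2\lambda\Delta_a)\rceil$. The stated $N_a^{\mathrm{gate}}$ is therefore a valid, somewhat loose, bound. The same monotonicity argument would absorb host pulls as well, since those only increase $n_a$ further.

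The main point needing care is that separation must hold at every round counted, because Lemma~\ref{lem:ei_shutoff} needs $v_t-m_a(t)\ge\Delta_a/2$ at that round. I would handle this by reading ``in the separated regime'' as restricting attention to rounds on which the separation inequality holds. On the event $\mathcal{E}_\delta$, once $n_a,n_{a^\star}\ge N_a$, Lemma~\ref{lem:separation} ensures this persists, because the counts only grow. The baseline then satisfies $v_t\ge m_{a^\star}(t)$, so $v_t-m_a(t)\ge m_{a^\star}(t)-m_a(t)\ge\Delta_a/2$ and the lemma applies. A minor point is the case where the gate bound is vacuous, i.e.\ $L/(2\lambda\Delta_a)<3$. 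Then no selection is possible at all, and the count bound holds trivially.
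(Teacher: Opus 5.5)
Your proof is correct and follows the paper's argument: the gate condition together with the cap $\surp_t(a)\le L$ forces $\EI_t(a)\ge\lambda/L$, which is incompatible with the separated bound of Lemma~\ref{lem:ei_shutoff} once $n_a(t)>L/(2\lambda\Delta_a)-3$, and each selection increments $n_a$. Your explicit indexing of the selection rounds, which gives the slightly sharper $k\le L/(2\lambda\Delta_a)-2$, and your check that separation persists on $\mathcal{E}_\delta$ spell out what the paper states in one line.
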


\begin{proof}
The gate requires $\EI_t(a) \cdot \surp_t(a) \ge \lambda$.
Since $\surp_t(a) \le L$, a necessary condition is $\EI_t(a) \ge \lambda/L$.
By Lemma~\ref{lem:ei_shutoff}, this fails once $n_a(t) > L/(2\lambda\Delta_a) - 3$.
\end{proof}

\section{Proofs for Section~\ref{sec:small_bernoulli}: Bernoulli Bandits}
\label{app:proofs_small}

We prove the two structural lemmas stated in Section~\ref{sec:small_bernoulli}.
Throughout, $m_a(t)$ is the posterior mean of arm $a$, $v_t=\max_a m_a(t)$ is the host baseline, and $\Delta_a=\mu_{a^\star}-\mu_a$ is the suboptimality gap.
Both proofs work on the good event $\mathcal{E}_\delta$ from Lemma~\ref{lem:concentration}.

\begin{proof}[Proof of Lemma~\ref{lem:separation_main}]
On $\mathcal{E}_\delta$, for $n_a(t) \ge N_a$:
\[
|\hat\mu_a(t) - \mu_a|
\le
\sqrt{\frac{\log(2KT/\delta)}{2N_a}}
\le \frac{\Delta_a}{8}.
\]
By Lemma~\ref{lem:shrinkage}:
\[
|m_a(t) - \hat\mu_a(t)| \le \frac{1}{N_a + 2} \le \frac{\Delta_a}{8}.
\]
The triangle inequality gives $|m_a(t) - \mu_a| \le \Delta_a/4$.
The same bound holds for $a^\star$ when $n_{a^\star}(t) \ge N_a$.
Therefore
\[
m_{a^\star}(t) - m_a(t)
\ge (\mu_{a^\star} - \Delta_a/4) - (\mu_a + \Delta_a/4)
= \Delta_a/2.
\]
\end{proof}

\begin{proof}[Proof of Lemma~\ref{lem:gate_main}]
After separation, $v_t \ge m_{a^\star}(t)$ and $m_{a^\star}(t) - m_a(t) \ge \Delta_a/2$, so $v_t - m_a(t) \ge \Delta_a/2$.
By the second-moment bound (Equation~\ref{eq:ei_second_moment}) applied to the Beta posterior:
\[
\EI_t(a)
\le \frac{\Var(\theta_a \mid \mathcal{H}_t)}{v_t - m_a(t)}
\le \frac{1}{2(n_a(t)+3)\,\Delta_a}.
\]
The gate requires $\EI_t(a) \cdot \surp_t(a) \ge \lambda$ with $\surp_t(a) \le L$, so necessarily $\EI_t(a) \ge \lambda/L$.
This fails once $n_a(t) > L/(2\lambda\Delta_a) - 3$.
Each override pull increments $n_a$, so the post-separation gate budget is at most $N_a^{\mathrm{gate}} = \lceil L/(2\lambda\Delta_a)\rceil$.
\end{proof}

\section{Proofs for Section~\ref{sec:small_linear}: Linear Gaussian Bandits}
\label{app:proofs_linear}

\begin{assumption}[Linear Gaussian bandit with delight gate]
\label{assump:lg}
Rewards are linear: $f(a) = x_a^\top \theta^\star$ with known features $\|x_a\|_2 \le 1$ and unknown $\theta^\star \in \mathbb{R}^d$.
Observation noise is $\mathcal{N}(0, \sigma^2)$, independent across rounds.
The prior is $\theta \sim \mathcal{N}(0, \eta^{-1}I)$ for a regularization parameter $\eta > 0$.
The host is greedy with respect to posterior means.
The gate uses price $\lambda > 0$ and cap $L > 0$.
\end{assumption}

Let $s_t^2(a) = x_a^\top \Sigma_t x_a$ denote the posterior variance of action $a$.
Let $\gamma_T := d\log(1 + T/(\eta\sigma^2 d))$ be the standard linear information-gain term.
The key point is that any selected gated override must occur on a round with substantial posterior uncertainty.

\begin{proposition}[Linear selected-override budget]
\label{prop:linear_gate_budget}
Under Assumption~\ref{assump:lg}, the number of rounds on which the override fires and selects an action from the gated set is at most
\[
N^{\mathrm{gate}}_{\mathrm{lin}}
\;\le\;
\frac{2\gamma_T}{\min\{1,\;\lambda^2/(L^2\sigma^2)\}}.
\]
In particular, the number of selected gated override rounds is independent of the number of arms $K$.
\end{proposition}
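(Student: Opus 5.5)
The plan follows the three-step outline given in Section~\ref{sec:small_linear}. First, gate-open selection forces $\EI_t(A_t)\ge\lambda/L$. Second, Gaussian expected improvement is controlled by the posterior standard deviation, so this forces $s_t(A_t)\ge\lambda/L$. Third, the elliptical potential lemma bounds how many rounds can have a selected action with posterior standard deviation above a fixed level.

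\emph{Step 1: gate implies large EI.} On a round where the override fires and selects $A_t\in\Gc_t$, we have $\EI_t(A_t)\surp_t(A_t)\ge\lambda$ with $\surp_t\le L$, hence $\EI_t(A_t)\ge\lambda/L$. This is the same observation as in Lemma~\ref{lem:gate_count}.

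\emph{Step 2: EI is at most the posterior standard deviation.} Under Assumption~\ref{assump:lg} the posterior of $f(a)$ is $\mathcal{N}(m_a(t),s_t^2(a))$. The baseline satisfies $v_t\ge m_a(t)$ because the greedy host maximizes posterior means. Writing $f(a)=m_a(t)+s_t(a)Z$, we get $(f(a)-v_t)^+\le s_t(a)Z^+$. Therefore $\EI_t(a)\le s_t(a)\E[Z^+]=s_t(a)/\sqrt{2\pi}\le s_t(a)$. Combining with Step 1, every selected gated round has $s_t^2(A_t)\ge\lambda^2/L^2$.

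\emph{Step 3: elliptical potential.} Write $\Sigma_t^{-1}=\eta I+\sigma^{-2}\sum_{u<t}x_{A_u}x_{A_u}^\top$. Every round updates this matrix, including host rounds and empty-gate rounds, and those updates only shrink later variances. Restricting attention to the selected gated rounds $\mathcal{T}$ is therefore harmless: $\sum_{t\in\mathcal{T}}\min\{1,s_t^2(A_t)/\sigma^2\}$ is dominated by the sum over all rounds. The standard elliptical potential lemma, via $\log\det$ telescoping and $\min\{1,u\}\le 2\log(1+u)$, gives
\[
\sum_{t=1}^T \min\{1,\sigma^{-2}s_t^2(A_t)\}\le 2\log\frac{\det\Sigma_{T+1}^{-1}}{\det(\eta I)}\le 2d\log\Bigl(1+\frac{T}{\eta\sigma^2 d}\Bigr)=2\gamma_T,
\]
where the last step uses $\|x_a\|\le1$ and AM--GM on the eigenvalues.

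On each $t\in\mathcal{T}$ the summand is at least $\min\{1,\lambda^2/(L^2\sigma^2)\}$. Hence $|\mathcal{T}|\cdot\min\{1,\lambda^2/(L^2\sigma^2)\}\le 2\gamma_T$, which is the claimed bound. It is independent of $K$ because only features enter.

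The main obstacle is bookkeeping in Step 3. The constants must match: $\log(1+u)\ge u/2$ holds on $u\in[0,1]$, and this must be combined correctly with the $\min$ truncation. One must also argue cleanly that non-gated rounds only help, since they add positive semidefinite terms that only increase $\log\det$. Step 2 also relies on $v_t\ge m_a(t)$, which is exactly where the greedy-host baseline is used.
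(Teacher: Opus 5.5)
Your proposal is correct and follows the paper's proof essentially step for step: the gate forces $\EI_t(A_t)\ge\lambda/L$, Gaussian EI is at most $s_t(A_t)$, and the elliptical potential bound $\sum_t \min\{1,s_t^2(A_t)/\sigma^2\}\le 2\gamma_T$ then caps the number of selected gated rounds. The differences are cosmetic. You bound EI pathwise via $(f(a)-v_t)^+\le s_t(a)Z^+$ instead of the closed form $s_t(a)[\phi(z)+z\Phi(z)]$, and both give the constant $1/\sqrt{2\pi}$; you also rederive the elliptical potential lemma rather than citing Abbasi-Yadkori et al. No argument about non-gated rounds ``helping'' is needed, because restricting a sum of nonnegative terms to a subset already suffices.
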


This result controls only the selected exploratory overrides, not the regret of the greedy host.
It establishes the main structural point: even in a structured model, the delight gate spends a finite uncertainty budget.
That budget scales with model complexity rather than with the number of available actions.

\noindent\textit{Proof sketch.}
Gate-open selection implies $\EI_t(a)\ge\lambda/L$.
Gaussian EI is at most the posterior standard deviation $s_t(a)$.
Thus every selected gated override consumes at least $\min\{1,\lambda^2/(L^2\sigma^2)\}$ units of elliptical potential.

We prove Proposition~\ref{prop:linear_gate_budget}.
Let $\hat\theta_t$ and $\Sigma_t$ denote the Gaussian posterior mean and covariance.
Define $\mu_t(a) = x_a^\top\hat\theta_t$ and $s_t^2(a) = x_a^\top\Sigma_t x_a$.

\begin{lemma}[Gaussian EI upper bound]
\label{lem:ei_linear}
For a greedy host with baseline $v_t = \max_a \mu_t(a)$,
\[
\EI_t(a) \le s_t(a)
\qquad\text{for all } a, t.
\]
\end{lemma}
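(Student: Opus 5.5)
The plan is to reduce the statement to a one-dimensional fact about Gaussian random variables. Under Assumption~\ref{assump:lg} the posterior over $\theta$ is $\mathcal{N}(\hat\theta_t,\Sigma_t)$, so $f(a)=x_a^\top\theta$ has posterior law $\mathcal{N}(\mu_t(a),s_t^2(a))$. The baseline $v_t=\max_b \mu_t(b)$ is $\Hc_t$-measurable, so it acts as a constant in $\EI_t(a)=\E[(f(a)-v_t)^+\mid\Hc_t]$. Moreover $v_t\ge\mu_t(a)$ for every $a$, because the greedy baseline is the maximum of the posterior means. Hence it suffices to show that for $Y\sim\mathcal{N}(\mu,s^2)$ and any $v\ge\mu$, we have $\E[(Y-v)^+]\le s$.

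The key step is monotonicity. The map $v\mapsto \E[(Y-v)^+]$ is nonincreasing (as already noted in the proof of Proposition~\ref{prop:pandora_gate}), so for $v\ge\mu$,
\[
\E[(Y-v)^+]\le \E[(Y-\mu)^+]=s\,\E[Z^+]=\frac{s}{\sqrt{2\pi}}\le s,
\]
where $Z\sim\mathcal{N}(0,1)$. If one prefers to avoid the explicit constant, one can instead use $\E[(Y-\mu)^+]\le \E|Y-\mu|\le \sqrt{\E(Y-\mu)^2}=s$ by Jensen's inequality, which gives the stated bound directly. The degenerate case $s_t(a)=0$ is immediate: then $f(a)=\mu_t(a)\le v_t$ almost surely, so $\EI_t(a)=0$.

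There is no real obstacle here. The only points that need care are, first, that $v_t$ is measurable with respect to the history, so it can be treated as a constant inside the conditional expectation, and second, that the greedy baseline dominates each posterior mean, which justifies reducing to the case $v=\mu$. The argument actually gives the sharper constant $1/\sqrt{2\pi}$, but the stated bound $\EI_t(a)\le s_t(a)$ is all that is needed for the elliptical-potential counting in Proposition~\ref{prop:linear_gate_budget}.
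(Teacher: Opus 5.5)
Your proof is correct and takes essentially the same route as the paper. Both arguments use the fact that Gaussian expected improvement is monotone in the baseline gap and equals $s_t(a)/\sqrt{2\pi}$ at zero gap. The paper phrases this through the closed form $s_t(a)\bigl[\phi(z)+z\Phi(z)\bigr]$ being increasing in $z\le 0$, whereas you apply monotonicity of $v\mapsto\E[(Y-v)^+]$ directly; this also covers the degenerate case $s_t(a)=0$, where the paper's standardization divides by zero, and your Jensen variant shows the bound does not need Gaussianity.
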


\begin{proof}
Let $X_a \sim \mathcal{N}(\mu_t(a), s_t^2(a))$.
Since $v_t \ge \mu_t(a)$, the standardized gap $z_t(a) := (\mu_t(a) - v_t)/s_t(a)$ satisfies $z_t(a) \le 0$.
The Gaussian expected-improvement formula gives
\[
\EI_t(a) = s_t(a)\bigl[\phi(z_t(a)) + z_t(a)\,\Phi(z_t(a))\bigr].
\]
Since $\phi(z) + z\Phi(z)$ is increasing in $z$ and equals $\phi(0) = 1/\sqrt{2\pi} < 1$ at $z = 0$, the bracketed term is at most $1$ for $z \le 0$, so $\EI_t(a) \le s_t(a)$.
\end{proof}

\begin{lemma}[Elliptical potential, \citealt{abbasi2011improved}]
\label{lem:elliptical}
Under Assumption~\ref{assump:lg},
\[
\sum_{t=1}^T \min\bigl\{1,\; s_t^2(A_t)/\sigma^2\bigr\} \le 2\gamma_T.
\]
\end{lemma}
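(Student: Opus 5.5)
We follow the standard elliptical-potential argument of \citet{abbasi2011improved}, adapted to the Gaussian posterior of Assumption~\ref{assump:lg}. The posterior precision is $V_t := \Sigma_t^{-1} = \eta I + \sigma^{-2}\sum_{s<t} x_{A_s}x_{A_s}^\top$. Each observation therefore gives the rank-one update $V_{t+1} = V_t + \sigma^{-2}x_{A_t}x_{A_t}^\top$. By the matrix determinant lemma,
\[
\det V_{t+1} = \det V_t\,\bigl(1 + \sigma^{-2}x_{A_t}^\top V_t^{-1}x_{A_t}\bigr) = \det V_t\,\bigl(1 + s_t^2(A_t)/\sigma^2\bigr).
\]
Telescoping over $t=1,\dots,T$ gives
\[
\sum_{t=1}^T \log\bigl(1 + s_t^2(A_t)/\sigma^2\bigr) = \log\frac{\det V_{T+1}}{\det V_1}.
\]

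Next we convert the left side into the truncated potential. For $u\in[0,1]$ we have $u \le 2\log(1+u)$, since $\log(1+u)\ge u/2$ on $[0,1]$ by concavity. Apply this with $u = \min\{1, s_t^2(A_t)/\sigma^2\}$ and use monotonicity of $\log(1+\cdot)$ to get
\[
\min\{1,s_t^2(A_t)/\sigma^2\} \le 2\log\bigl(1+s_t^2(A_t)/\sigma^2\bigr).
\]
Summing, the truncated potential is at most $2\log(\det V_{T+1}/\det V_1)$.

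It remains to bound the log-determinant ratio by $\gamma_T$. We have $\det V_1 = \eta^d$. By AM--GM on the eigenvalues, $\det V_{T+1} \le (\operatorname{tr} V_{T+1}/d)^d$. Since $\|x_a\|_2\le 1$, the trace satisfies $\operatorname{tr} V_{T+1} \le d\eta + T/\sigma^2$. Hence
\[
\log\frac{\det V_{T+1}}{\det V_1} \le d\log\Bigl(1 + \frac{T}{\eta\sigma^2 d}\Bigr) = \gamma_T.
\]
Combining the three steps yields the claimed bound $2\gamma_T$.

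The main point requiring care is the identification $x^\top V_t^{-1}x = s_t^2$ with the correct $\sigma^2$ scaling. Here $s_t^2$ is the posterior variance in the convention where the precision includes the factor $\sigma^{-2}$, so the normalized quantity $s_t^2(A_t)/\sigma^2$ appearing in the lemma matches $\sigma^{-2}x^\top \Sigma_t x$ exactly. This normalization is also why the truncation at $1$ is needed: the elementary inequality $u\le 2\log(1+u)$ only holds for bounded $u$. Everything else is routine.
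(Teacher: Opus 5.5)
Your proof is correct and is the standard argument behind Lemma~11 of \citet{abbasi2011improved}, which the paper cites rather than reproving. That argument telescopes the rank-one determinant updates, uses $\min\{1,u\}\le 2\log(1+u)$, and applies the AM--GM determinant--trace bound. Your identification $s_t^2(A_t)=x_{A_t}^\top V_t^{-1}x_{A_t}$ with $V_t=\Sigma_t^{-1}=\eta I+\sigma^{-2}\sum_{s<t}x_{A_s}x_{A_s}^\top$ correctly reproduces the constant $\gamma_T=d\log(1+T/(\eta\sigma^2 d))$.
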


\begin{proof}
This is the standard elliptical potential bound for linear Gaussian posteriors.
See \citet{abbasi2011improved}, Lemma~11.
\end{proof}

\begin{proof}[Proof of Proposition~\ref{prop:linear_gate_budget}]
Let $\Tc_{\mathrm{sel}} := \{t \le T : \text{the override fires and } A_t \sim q_t^\lambda\}$.
If $t \in \Tc_{\mathrm{sel}}$, the selected action $A_t$ lies in the gated set, so
\[
\EI_t(A_t)\,\surp_t(A_t) \ge \lambda.
\]
Since $\surp_t(A_t) \le L$, this gives $\EI_t(A_t) \ge \lambda/L$.
By Lemma~\ref{lem:ei_linear}, $\EI_t(A_t) \le s_t(A_t)$, so $s_t^2(A_t) \ge \lambda^2/L^2$ for every $t \in \Tc_{\mathrm{sel}}$.
Each such round contributes at least $\min\{1, \lambda^2/(L^2\sigma^2)\}$ to the elliptical potential sum.
Therefore
\[
|\Tc_{\mathrm{sel}}| \cdot \min\bigl\{1,\; \lambda^2/(L^2\sigma^2)\bigr\}
\;\le\;
\sum_{t=1}^T \min\bigl\{1,\; s_t^2(A_t)/\sigma^2\bigr\}
\;\le\;
2\gamma_T,
\]
where the last inequality is Lemma~\ref{lem:elliptical}.
Rearranging gives $|\Tc_{\mathrm{sel}}| \le 2\gamma_T / \min\{1, \lambda^2/(L^2\sigma^2)\}$.
\end{proof}

\section{Proofs for Section~\ref{sec:analysis}: Large World}
\label{app:proofs_large}

Throughout this section, prospective delight is $\tilde{\delight}_t(a) = \EI_t(a)\,\surp_t(a)$.
The gated set is $\Gc_t = \{a : \tilde{\delight}_t(a) \ge \lambda\}$.
The host baseline is the current posterior best $v_t = \max_a m_a(t)$, as in Algorithm~\ref{alg:explore}.

\begin{proof}[Proof of Proposition~\ref{prop:budget}]
The proof separates tried and untried arms.

\paragraph{Tried arms after separation.}
If a tried arm $a$ has separated from the optimal arm, then Lemma~\ref{lem:gate_count} bounds its remaining selected-pull budget by
\[
  N_a^{\mathrm{gate}}
  =
  \left\lceil \frac{L}{2\lambda\Delta_a}\right\rceil .
\]
This bound depends only on $\lambda$, $L$, and $\Delta_a$, not on $K$.

\paragraph{Untried arms.}
By Lemma~\ref{lem:untried}, every untried arm satisfies $\tilde{\delight}_t(a) \le L(1-v_t)^2/2$.
If $v_t > v_{\mathrm{off}}$, then no untried arm lies in $\Gc_t$.
This is a pointwise statement: the gate criterion for any single untried arm depends only on the current $v_t$, not on how many other untried arms exist.
\end{proof}

\begin{proof}[Proof of Theorem~\ref{thm:thermostat}]
For UCB with $+\infty$ index for unpulled arms: any unpulled arm is strictly preferred to any pulled arm.
When $K \ge T$, the algorithm selects a fresh arm at every round.
For DE: by Proposition~\ref{prop:budget}, whenever $v_t > v_{\mathrm{off}}$, no untried arm lies in $\Gc_t$.
Hence every override round at such a time is restricted to tried actions.
\end{proof}

\section{Experimental Details}
\label{app:exp_details}

\paragraph{Bernoulli bandits.}
Arm means are drawn i.i.d.\ from $\mathrm{Uniform}(0,1)$ at the start of each episode.
Regret is cumulative: $\sum_{t=1}^T (f^\star - f(A_t))$.
Thompson Sampling uses exact Beta-Bernoulli conjugate updates.
All curves average over 100 independent seeds.
Shaded regions show $\pm 1$ standard error.
In the arms-scaling experiments we sweep $K$ across the values shown on the horizontal axis at fixed horizon $T = 1000$.
In the faceted comparison we use $K \in \{10, 100, 1000\}$.

\begin{figure}[ht!]
\centering
\begin{subfigure}{0.48\textwidth}
  \centering
  \includegraphics[width=\linewidth]{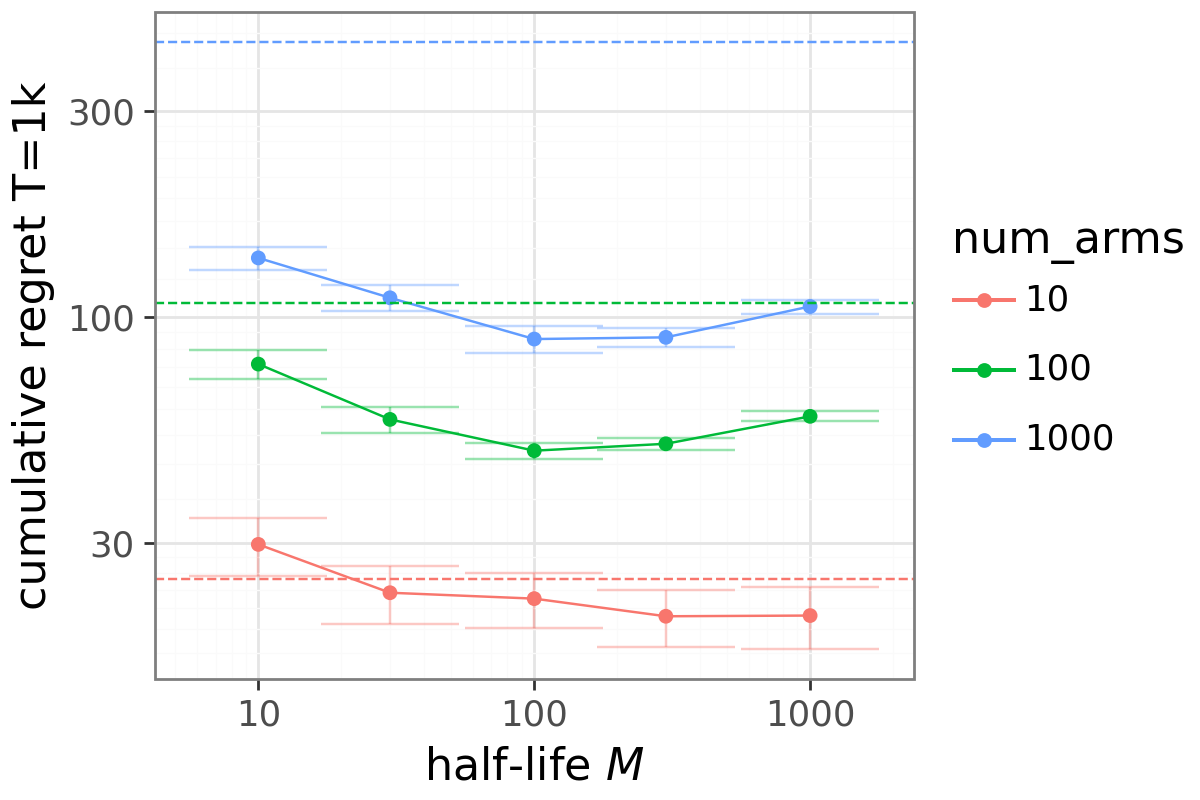}
  \caption{Sensitivity to annealing half-life $M$.}
  \label{fig:tune_M}
\end{subfigure}
\hfill
\begin{subfigure}{0.48\textwidth}
  \centering
  \includegraphics[width=\linewidth]{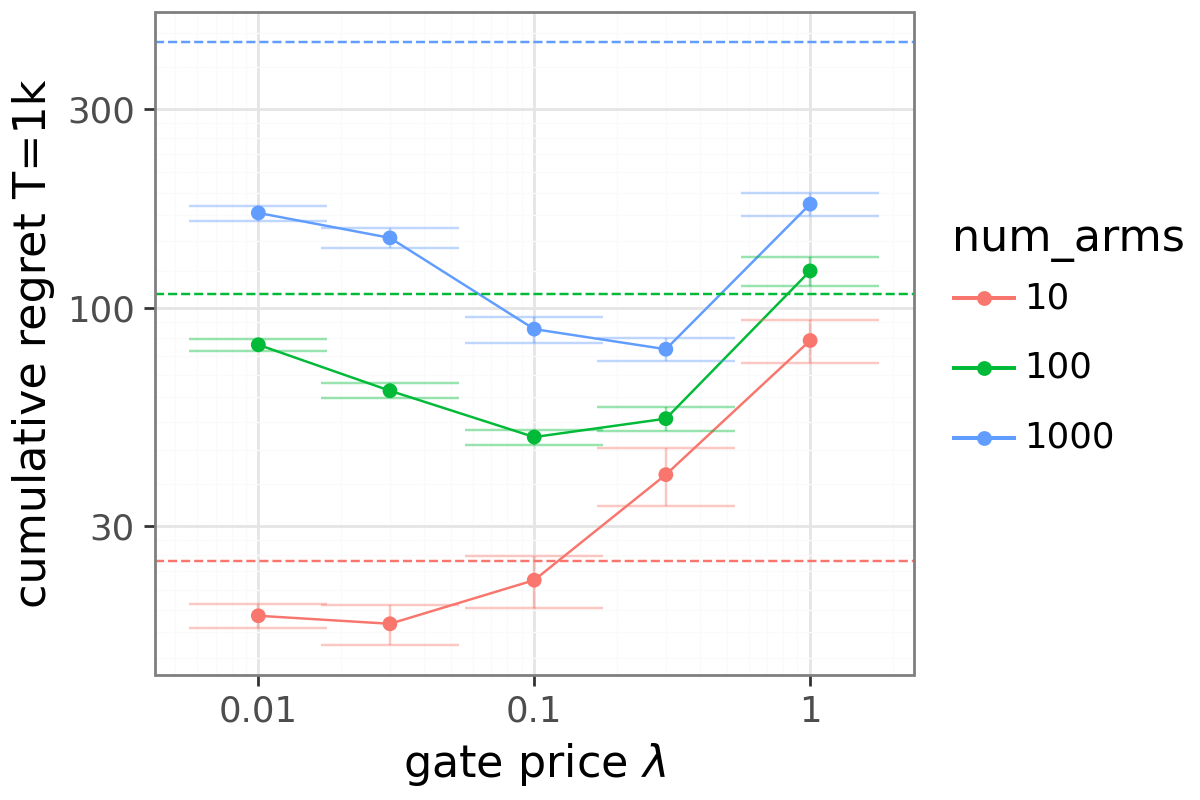}
  \caption{Sensitivity to gate price $\lambda$.}
  \label{fig:tune_lambda}
\end{subfigure}
\caption{Sensitivity to the annealing half-life $M$ and gate price $\lambda$.
Dashed lines show Thompson Sampling.
DE outperforms over a broad range of both hyperparameters, indicating that the method does not rely on fine tuning.}
\label{fig:bernoulli_tune}
\end{figure}

\paragraph{Linear bandits.}

Action features $\phi(a) \in \mathbb{R}^d$ are drawn i.i.d.\ from $\mathcal{N}(0, I_d/d)$ once per instance.
The reward parameter $\theta^\star \in \mathbb{R}^d$ is drawn from $\mathcal{N}(0, I_d)$.
Rewards are $f(a) = \phi(a)^\top \theta^\star + \eta$ with $\eta \sim \mathcal{N}(0, \sigma^2)$.
Thompson Sampling maintains the exact Gaussian posterior.
EI is computed analytically from the posterior predictive.
Surprisal uses the same cap $L$ as the Bernoulli setting.
Hyperparameters ($M=100$, $\lambda=0.1$) are transferred from Bernoulli without retuning.
In the dimension sweep we fix $K = 100$ and $\sigma = 1.0$, varying $D$.
In the noise sweep we fix $D = 30$ and $K = 100$, varying $\sigma$.

\paragraph{Tabular MDPs (DeepSea).}
We extend DE to the sequential setting using a mean-based simplification of Expected Improvement.
At the start of each episode $e$, we freeze a posterior-mean planning model $Q^{\mathrm{plan}}_e$ and define a Boltzmann host
\[
  \pi^{\mathrm{host}}_e(a\mid s)\propto \exp(Q^{\mathrm{plan}}_e(s,a)/\tau),
  \qquad \tau=0.01.
\]
During the episode, the posterior is updated online, producing an updated posterior-mean evaluator $Q^{\mathrm{post}}_{e,t}$.
We compute
\[
  \EI_{e,t}(s,a)
  =
  \left(Q^{\mathrm{post}}_{e,t}(s,a)-V^{\mathrm{plan}}_e(s)\right)_+,
  \qquad
  V^{\mathrm{plan}}_e(s)=\sum_b\pi^{\mathrm{host}}_e(b\mid s)Q^{\mathrm{plan}}_e(s,b).
\]
Surprisal is computed under the frozen host policy.

We use the same gate price $\lambda=0.1$ and surprisal cap $L=10$ as in the Bernoulli setting, transferred without retuning.
We compare against PSRL and annealed $\varepsilon$-greedy.
For the scaling plots we sweep size $H$ from 2 to 50 with 30 seeds per condition and run for $1{,}000$ episodes.

\begin{figure}[ht!]
\centering
\begin{subfigure}{0.48\textwidth}
  \centering
  \includegraphics[width=\linewidth]{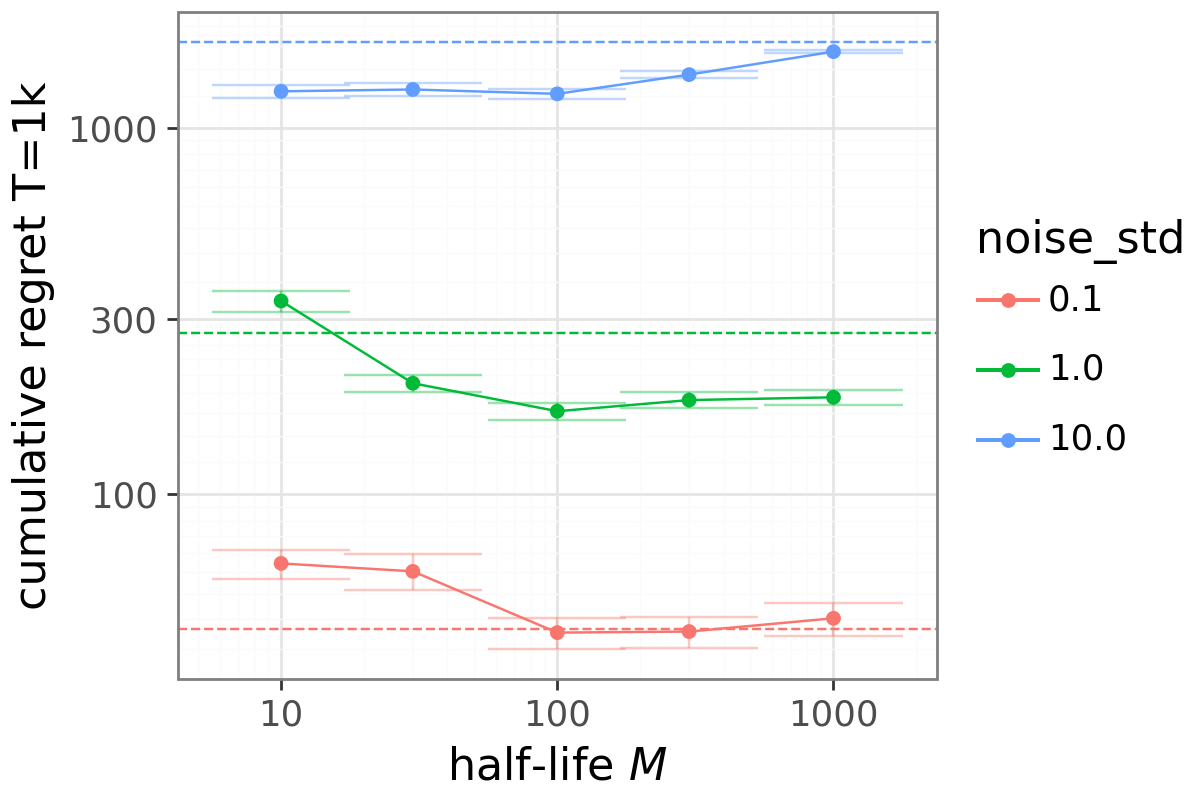}
  \caption{Sensitivity to annealing half-life $M$.}
  \label{fig:linear_tune_M}
\end{subfigure}
\hfill
\begin{subfigure}{0.48\textwidth}
  \centering
  \includegraphics[width=\linewidth]{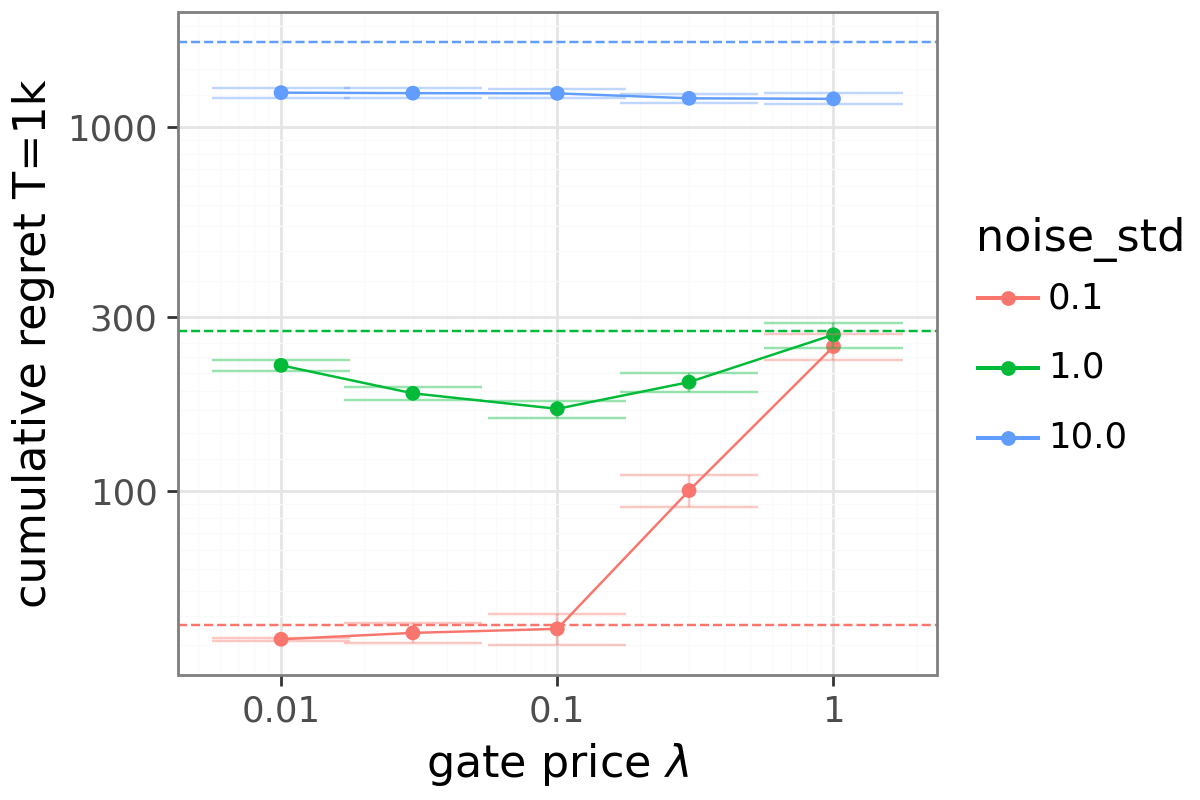}
  \caption{Sensitivity to gate price $\lambda$.}
  \label{fig:linear_tune_lambda}
\end{subfigure}
\caption{Linear bandit sensitivity to the annealing half-life $M$ and gate price $\lambda$.
Dashed lines show Thompson Sampling at corresponding noise levels.
DE outperforms over a broad range of both hyperparameters, confirming that the Bernoulli basin transfers to the linear setting.}
\label{fig:linear_tune}
\end{figure}

\begin{figure}[ht!]
\centering
\includegraphics[width=0.95\textwidth]{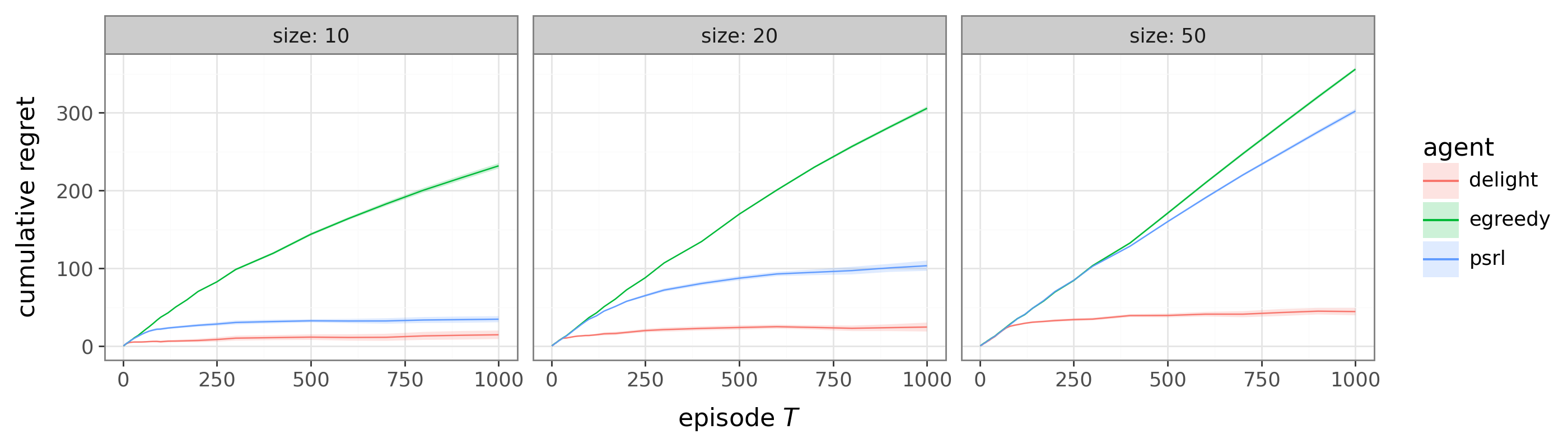}
\caption{Learning curves for DeepSea with $H \in \{5, 10, 20\}$.
DE (Delight) tracks and even outperforms PSRL, while $\varepsilon$-greedy incurs much higher regret.}
\label{fig:deepsea_curves}
\end{figure}

\section{Expected Improvement and One-Step Value of Information}
\label{app:kg_ei}

In noisy bandits, pulling an arm does not reveal its latent mean.
The following lemma shows that, for independent-arm posteriors, DE's expected-improvement term upper bounds the one-step Bayesian value of information.

\begin{lemma}[Knowledge gradient is bounded by expected improvement]
\label{lem:kg_le_ei}
Assume independent-arm posteriors, so observing action $a$ only updates the posterior of $f(a)$.
Let
\[
  \mathrm{KG}_t(a)
  :=
  \E\left[
    \bigl(\max_b m_b(t+1)-v_t\bigr)^+
    \mid \Hc_t,A_t=a
  \right]
\]
be the one-step knowledge gradient.
Then $\mathrm{KG}_t(a)\le \EI_t(a)$.
\end{lemma}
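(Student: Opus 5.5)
Since observing $a$ changes only the posterior of $f(a)$, the plan is to reduce the maximum after the update to a maximum involving only $m_a(t+1)$, and then use the martingale property of posterior means together with Jensen's inequality.

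\emph{Step 1 (reduction).} Under independent-arm posteriors, pulling $a$ leaves $m_b(t+1)=m_b(t)$ for every $b\neq a$. Hence
\[
\max_b m_b(t+1)=\max\Bigl\{m_a(t+1),\,\max_{b\neq a} m_b(t)\Bigr\}\le \max\{m_a(t+1),\,v_t\},
\]
using $\max_{b\neq a}m_b(t)\le v_t$. Subtracting $v_t$ and taking positive parts gives
\[
\bigl(\max_b m_b(t+1)-v_t\bigr)^+\le \bigl(m_a(t+1)-v_t\bigr)^+ .
\]
This uses only monotonicity of $x\mapsto(x-v_t)^+$ and $\max\{x,v_t\}-v_t=(x-v_t)^+$.

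\emph{Step 2 (martingale representation).} Write $m_a(t+1)=\E[f(a)\mid\Hc_{t+1}]$, where $\Hc_{t+1}$ is $\Hc_t$ augmented with $A_t=a$ and the observed reward $Y$. Since $v_t$ is $\Hc_t$-measurable and $x\mapsto(x-v_t)^+$ is convex, conditional Jensen gives
\[
\bigl(m_a(t+1)-v_t\bigr)^+=\bigl(\E[f(a)-v_t\mid\Hc_{t+1}]\bigr)^+\le \E\bigl[(f(a)-v_t)^+\mid\Hc_{t+1}\bigr].
\]

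\emph{Step 3 (tower property).} Take $\E[\cdot\mid\Hc_t,A_t=a]$ of the chain from Steps~1--2. By the tower property, the right-hand side becomes $\E[(f(a)-v_t)^+\mid\Hc_t]=\EI_t(a)$. This requires that the predictive distribution of $Y$ used in $\mathrm{KG}_t$ is the Bayesian one, so that $\Hc_t\subset\Hc_{t+1}$ and the posterior of $f(a)$ given $\Hc_t$ is the prior for the update. The choice $A_t=a$ is itself $\Hc_t$-measurable, or independent of $f$ given $\Hc_t$, so conditioning on it does not change the posterior of $f(a)$.

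The main point needing care is Step~3's measure-theoretic bookkeeping: that conditioning on $A_t=a$ does not alter the law of $f(a)$ given $\Hc_t$. Integrability of $f(a)$ is also needed for Jensen; it holds for bounded rewards and for Gaussian posteriors.
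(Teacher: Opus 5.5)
Your proposal is correct and follows the paper's proof essentially step for step: reduce the post-update maximum to arm $a$ alone, apply conditional Jensen to the convex map $x\mapsto(x-v_t)^+$, and conclude with the tower property. The only cosmetic difference is that your Step~1 is an inequality, while the paper writes it as an equality; the equality also holds, since $\max_{b\neq a}m_b(t)\le v_t$ gives $(\max\{x,c\}-v_t)^+=(x-v_t)^+$ for any $c\le v_t$.
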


\begin{proof}
Let $Y$ be the next observation from action $a$, and write $m_a^+(Y)=\E[f(a)\mid \Hc_t,Y]$.
Since only arm $a$ is updated,
\[
  \bigl(\max_b m_b(t+1)-v_t\bigr)^+
  =
  \bigl(m_a^+(Y)-v_t\bigr)^+.
\]
The map $x\mapsto (x-v_t)^+$ is convex, so Jensen's inequality gives
\[
  \bigl(m_a^+(Y)-v_t\bigr)^+
  =
  \left(\E[f(a)\mid \Hc_t,Y]-v_t\right)^+
  \le
  \E[(f(a)-v_t)^+\mid \Hc_t,Y].
\]
Taking expectation over $Y$ and using the tower property yields
$\mathrm{KG}_t(a) \le \E[(f(a)-v_t)^+\mid \Hc_t] = \EI_t(a)$.
\end{proof}

The independence assumption is essential.
With correlated posteriors or linear bandits, observing action $a$ can update beliefs about other actions, and the action-level inequality need not hold.

\section{Proof of Theorem~\ref{thm:tail_rate}: Prior-Tail Scaling}
\label{app:tail_scaling}

Throughout, $p(y)=\Pr(X\ge 1-y)$ is the upper-tail probability, $R_T$ is the cumulative regret, and $\lambda_T$ is the corresponding DE price from~\eqref{eq:tail_price}.

\begin{proof}[Proof of Theorem~\ref{thm:tail_rate}]
For the upper bound, let $\pi_y$ sample fresh arms until observing $X\ge 1-y$, then exploit that arm.
The number of fresh samples before success is geometric with mean $1/p(y)$.
Each exploratory sample incurs regret at most one.
After success, every subsequent round incurs regret at most $y$.
Therefore $R_T(\pi_y)\le 1/p(y)+Ty$.
Using $p(y)\ge c_-y^\alpha$ gives
\[
  R_T(\pi_y)
  \le
  \frac{1}{c_-y^\alpha}+Ty.
\]
Choosing $y_T\asymp T^{-1/(\alpha+1)}$ balances the two terms and yields $R_T=O(T^{\alpha/(\alpha+1)})$.

For the lower bound, fix any policy and any $y>0$.
Let $N_T$ be the number of fresh arms sampled by time $T$, and let $\bar{r}:=1-\E[X]>0$, which is positive under the assumed nontrivial upper-tail model.
Each fresh sample has expected regret $\bar{r}$, so $R_T\ge \bar{r}\,\E[N_T]$.
If $\E[N_T]\ge 1/(4p(y))$, then $R_T\ge \bar{r}/(4p(y))$.
Otherwise $\E[N_T]<1/(4p(y))$.
By a union bound,
\[
  \Pr(\text{some sampled arm has }X\ge 1-y)
  \le p(y)\E[N_T] < 1/4.
\]
With probability at least $3/4$, no sampled arm has value at least $1-y$.
In the revealed-value discovery model, every action played by the policy is either a fresh inspection or a previously inspected arm.
On this event, all such arms have value below $1-y$, so every round incurs regret at least $y$.
Thus $R_T\ge (3/4)Ty$.
Combining the two cases,
\[
  R_T
  \ge
  \min\left\{
    \frac{\bar{r}}{4p(y)},\;
    \frac{3}{4}Ty
  \right\}.
\]
Using $p(y)\le c_+y^\alpha$ and choosing $y\asymp T^{-1/(\alpha+1)}$ gives $R_T=\Omega(T^{\alpha/(\alpha+1)})$.

For the DE price:
\[
  \lambda_T
  =
  L\int_0^{y_T}p(s)\,ds
  =
  \Theta(L\,y_T^{\alpha+1})
  =
  \Theta(L/T).
\]
With constant override rate $\varepsilon$, the expected calendar time to obtain a successful fresh inspection is $1/(\varepsilon p(y))$, so the upper bound becomes
\[
  R_T(y)\le \frac{1}{\varepsilon p(y)}+Ty.
\]
Optimizing gives
\[
  y_T\asymp(\varepsilon T)^{-1/(\alpha+1)},
  \qquad
  R_T=O\!\left(\varepsilon^{-1/(\alpha+1)}T^{\alpha/(\alpha+1)}\right),
\]
and therefore
\[
  \lambda_T
  =
  L\int_0^{y_T}p(s)\,ds
  =
  \Theta(L/(\varepsilon T)).
\]

\end{proof}

\section{Fixed-History Comparison with $\varepsilon$-Greedy}
\label{app:fixed_history}

DE and $\varepsilon$-greedy share the same host, schedule, and override probability.
They differ only in the override distribution.

\begin{theorem}[Fixed-history comparison]
\label{thm:coupling}
Fix any instance and history $\Hc_t$.
On an override round:
\begin{enumerate}
\item If $\Gc_t = \emptyset$, DE plays the host while $\varepsilon$-greedy draws uniformly from all $K$ actions.
\item If $\Gc_t \neq \emptyset$, DE draws only from actions satisfying $\EI_t(a) \ge \lambda/L$, while $\varepsilon$-greedy draws uniformly from all $K$ actions.
\end{enumerate}
\end{theorem}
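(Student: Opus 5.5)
The argument is essentially definitional: we unfold the override distributions of the two algorithms at a fixed history and compare them case by case. Fix $\Hc_t$. Both DE and $\varepsilon$-greedy use the same host $\pi_t^{\mathrm{host}}$ and the same $\varepsilon_t$, and on an override round each draws from its own override distribution. For $\varepsilon$-greedy this distribution is uniform on $\Ac$ by definition. For DE it is $q_t^\lambda$ as specified in Algorithm~\ref{alg:explore}. So the whole claim reduces to reading off $q_t^\lambda$ in the two cases $\Gc_t=\emptyset$ and $\Gc_t\neq\emptyset$.

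\emph{Case 1, $\Gc_t=\emptyset$.} The else-branch of Algorithm~\ref{alg:explore} sets $q_t^\lambda=\pi_t^{\mathrm{host}}$. An override round of DE is therefore distributed exactly as a host round, while $\varepsilon$-greedy draws uniformly. This gives item~1 with nothing further to check.

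\emph{Case 2, $\Gc_t\neq\emptyset$.} Here $q_t^\lambda(a)\propto\tilde{\delight}_t(a)\one\{a\in\Gc_t\}$. This is a well-defined distribution because every $a\in\Gc_t$ has $\tilde{\delight}_t(a)\ge\lambda>0$, so the normalizer is positive. Its support is contained in $\Gc_t$. For any $a\in\Gc_t$ we have $\EI_t(a)\surp_t(a)\ge\lambda$. Since $0\le\surp_t(a)\le L$ by the cap in the definition of surprisal, and $\lambda>0$, we get $\surp_t(a)>0$ and hence
\[
\EI_t(a)\ge\frac{\lambda}{\surp_t(a)}\ge\frac{\lambda}{L}.
\]
This is the same step used in Lemma~\ref{lem:gate_count} and Proposition~\ref{prop:linear_gate_budget}. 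It shows that DE's override places all of its mass on actions with $\EI_t(a)\ge\lambda/L$, while $\varepsilon$-greedy remains uniform on all $K$ actions. This gives item~2.

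Nothing in this argument uses the greedy-host assumption. Surprisal is capped at $L$ for any host, so the bound holds for the Boltzmann host used in the experiments as well. The only point that needs care is that the division is legitimate: $\surp_t(a)$ must be nonzero on $\Gc_t$. This follows immediately from $\lambda>0$.
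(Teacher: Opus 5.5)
Your proposal is correct and matches the paper's proof: case 1 comes straight from the fallback $q_t^\lambda=\pi_t^{\mathrm{host}}$, and case 2 combines $\tilde{\chi}_t(a)\ge\lambda$ with the cap $\ell_t(a)\le L$ to give $\mathrm{EI}_t(a)\ge\lambda/L$. Your check that $\ell_t(a)>0$ before dividing is valid but unnecessary, since $\mathrm{EI}_t(a)\ge 0$ already gives $L\,\mathrm{EI}_t(a)\ge \ell_t(a)\,\mathrm{EI}_t(a)\ge\lambda$.
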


\begin{proof}
Both facts follow from the algorithm definition.
In case~1, the gated set is empty, so $q_t^\lambda = \pi_t^{\mathrm{host}}$ by construction.
In case~2, every $a \in \Gc_t$ satisfies
$\tilde{\delight}_t(a)=\EI_t(a)\surp_t(a)\ge\lambda$
and $\surp_t(a)\le L$, so $\EI_t(a)\ge\lambda/L$.

\end{proof}

\begin{proposition}[Wasteful late exploration]
\label{prop:waste}
On any override round with $\Gc_t = \emptyset$, DE plays the host and incurs zero incremental regret relative to the host policy.
In contrast, $\varepsilon$-greedy's uniform override deviates from the host, paying expected incremental cost $\mu_{a_t^{\mathrm{host}}} - (1/K)\sum_a \mu_a$.
\end{proposition}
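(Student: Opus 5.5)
The plan is to argue directly from the definition of the acting policy~\eqref{eq:acting}, conditioning on a fixed history $\Hc_t$ and on the event that the override fires. This is the same conditioning used in Theorem~\ref{thm:coupling}. DE and $\varepsilon$-greedy share the host and the schedule $\varepsilon_t$. So on the host branch both algorithms play $\pi_t^{\mathrm{host}}$ and any difference between them lives entirely in the override distribution. I will define the incremental regret of an override round as the expected regret of the override action minus the expected regret of the host action at the same history. Because the regret comparator $\max_a \mu_a$ is common to both, this reduces to $\E_{a\sim\pi_t^{\mathrm{host}}}[\mu_a]-\E_{a\sim q}[\mu_a]$, where $q$ is the override distribution in use.

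For DE, I will invoke case~1 of Theorem~\ref{thm:coupling}, or equivalently the \emph{else} branch of Algorithm~\ref{alg:explore}. When $\Gc_t=\emptyset$ we have $q_t^\lambda=\pi_t^{\mathrm{host}}$, so the override action has exactly the host's distribution and the difference above is identically zero. This holds pathwise in distribution, not just in expectation. For $\varepsilon$-greedy, the override is uniform on $\Ac$, so its expected mean reward is $(1/K)\sum_a\mu_a$. In the greedy-host limit analyzed in Section~\ref{sec:analysis}, $\pi_t^{\mathrm{host}}$ is a point mass on $a_t^{\mathrm{host}}=\argmax_a m_a(t)$. Subtracting therefore gives incremental cost $\mu_{a_t^{\mathrm{host}}}-(1/K)\sum_a\mu_a$. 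I will state tie-breaking explicitly, fixing $a_t^{\mathrm{host}}$ as the host's deterministic choice, so that the expression is well defined.

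The only delicate point is interpretive rather than technical. The stated quantity is a signed cost, and it is not guaranteed to be nonnegative, since the host may currently be a poor arm. I would therefore phrase the contrast as \emph{deviation} cost relative to the host: exactly what the proposition claims, with no sign claim attached. If a sign is desired, I would add a remark that on the separation event of Lemma~\ref{lem:separation}, once the host is $a^\star$, this cost equals $\frac1K\sum_a\Delta_a\ge 0$, and is strictly positive unless all arms are optimal. Beyond that, the proof is bookkeeping. I expect no real obstacle other than being careful to condition on the same history for both algorithms, which is exactly the fixed-history framing of Theorem~\ref{thm:coupling}.
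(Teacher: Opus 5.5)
Your proposal is correct and follows essentially the same route as the paper: case~1 of Theorem~\ref{thm:coupling} gives $q_t^\lambda=\pi_t^{\mathrm{host}}$, hence zero incremental regret for DE, and for a deterministic host the uniform override gives the cost $\mu_{a_t^{\mathrm{host}}}-(1/K)\sum_a\mu_a$ by direct subtraction. Your remarks on tie-breaking and on the sign of this cost are harmless extras that the statement as written does not need.
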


\begin{proof}[Proof of Proposition~\ref{prop:waste}]
When $\Gc_t=\emptyset$, Theorem~\ref{thm:coupling} case~1 applies: DE draws exactly the host distribution, so its incremental regret relative to the host is zero.
If the host is deterministic and selects $a_t^{\mathrm{host}}$, then $\varepsilon$-greedy replaces that action by a uniform draw over $\{1,\dots,K\}$.
Its conditional expected reward is $(1/K)\sum_{a=1}^K \mu_a$, so its incremental cost relative to the host is
\[
  \mu_{a_t^{\mathrm{host}}} - \frac{1}{K}\sum_a \mu_a .
\]
\end{proof}

\begin{proposition}[Targeted versus uniform exploration]
\label{prop:targeted}
On any override round with $\Gc_t \neq \emptyset$,
$\E[\EI_t(A_t^{\mathrm{DE}}) \mid \Hc_t] \ge \lambda/L$.
In contrast, $\varepsilon$-greedy's expected improvement is $(1/K)\sum_a \EI_t(a)$, which has no comparable lower bound.
\end{proposition}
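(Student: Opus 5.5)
The plan is to work conditionally on $\Hc_t$ and on the event that the override fires, so that $A_t^{\mathrm{DE}}\sim q_t^\lambda$ and $A_t^{\varepsilon}$ is uniform on $\Ac$. Everything then reduces to the support of $q_t^\lambda$, which Theorem~\ref{thm:coupling} already describes.

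\textbf{DE lower bound.} Because $\Gc_t\neq\emptyset$, Algorithm~\ref{alg:explore} defines
\[
  q_t^\lambda(a)=\frac{\tilde{\delight}_t(a)\one\{a\in\Gc_t\}}{\sum_{b\in\Gc_t}\tilde{\delight}_t(b)}.
\]
This is a valid distribution: each $b\in\Gc_t$ has $\tilde{\delight}_t(b)\ge\lambda>0$, so the normalizer is positive. Its support is contained in $\Gc_t$. Case~2 of Theorem~\ref{thm:coupling} says every $a\in\Gc_t$ satisfies $\EI_t(a)\ge\lambda/L$; the reason is that $\surp_t(a)\le L$ by the cap, so $\EI_t(a)\ge \tilde{\delight}_t(a)/L\ge\lambda/L$. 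It follows that
\[
  \E[\EI_t(A_t^{\mathrm{DE}})\mid\Hc_t]=\sum_{a\in\Gc_t}q_t^\lambda(a)\EI_t(a)\ge\frac{\lambda}{L}\sum_{a\in\Gc_t}q_t^\lambda(a)=\frac{\lambda}{L}.
\]
This holds for any host; it is not restricted to the greedy one.

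\textbf{$\varepsilon$-greedy.} The uniform override gives $\E[\EI_t(A_t^{\varepsilon})\mid\Hc_t]=(1/K)\sum_a\EI_t(a)$ directly from the definition.

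\textbf{No comparable lower bound.} This part is qualitative, so I would settle it with an explicit example rather than an inequality. I would build a history in which exactly one arm $a_1$ is fresh, so $\EI_t(a_1)=(1-v_t)^2/2$ by Lemma~\ref{lem:untried}, and choose $v_t$ small enough that $\EI_t(a_1)\ge\lambda/L$. Then $a_1\in\Gc_t$ under a greedy host, which gives $\surp_t(a_1)=L$. All other arms would be heavily pulled and separated. By Lemma~\ref{lem:ei_shutoff}, their EI is at most $1/(2(n+3)\Delta)$, which I can make as small as I like by taking $n$ large. The uniform average is then at most $1/(2K)+\eta$, so it tends to $0$ as $K\to\infty$ and $n\to\infty$. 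In particular it falls below $\lambda/L$ once $K>L/(2\lambda)$.

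The main obstacle is consistency of this example. Making the other arms' EI small requires their posterior means to sit strictly below $v_t$, while $v_t$ itself must stay low enough that the fresh arm still clears the gate. The way around it is to let the host arm be a well-sampled arm with mean $v_t$ around $1/2$, and to give the other arms means around $1/4$ with many pulls. Their EI is then at most $1/(4(n+3))$, and the fresh arm has $\EI=1/8$, which clears the gate whenever $\lambda/L\le1/8$.

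The host arm is the one arm that may still have non-negligible EI, so I need to bound it separately. Since $m_{\mathrm{host}}=v_t$, its EI equals $\E[(f-m)^+]\le \sigma\le 1/(2\sqrt{n+3})$, which is also negligible for large $n$. This is an EI estimate, not a surprisal one: the host's surprisal is $0$, which only keeps it out of $\Gc_t$ and says nothing about its contribution to the uniform average. With that bound in place, the uniform average is $O(1/K)+O(n^{-1/2})$, which can be made smaller than $\lambda/L$.
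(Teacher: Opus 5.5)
Your argument for the DE bound and for the $\varepsilon$-greedy average matches the paper's proof. $q_t^\lambda$ is supported on $\Gc_t$, and $\surp_t(a)\le L$ forces $\EI_t(a)\ge\lambda/L$ there, so conditioning on the override firing gives the claim for any host.

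The paper only asserts the ``no comparable lower bound'' clause, so your explicit construction goes beyond it. It works with two small repairs. First, the host arm's posterior mean must be at least $1/2$, with ties broken against the fresh arm. The fresh arm's mean is exactly $1/2$, so otherwise it becomes the greedy host with zero surprisal and the gate closes. Second, the separated arms' EI bound from \eqref{eq:beta_ei} is of order $1/(n+3)$ rather than $1/(4(n+3))$; this does not affect the conclusion.
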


\begin{proof}
Every $a\in\Gc_t$ satisfies
$\tilde{\delight}_t(a)=\EI_t(a)\surp_t(a)\ge\lambda$.
Since $\surp_t(a)\le L$, every such action has $\EI_t(a)\ge\lambda/L$.
DE samples only from $\Gc_t$, so its conditional expected improvement is at least $\lambda/L$.
By contrast, $\varepsilon$-greedy samples uniformly from all actions, giving expected improvement $(1/K)\sum_a\EI_t(a)$.
\end{proof}

\section{Necessity Examples}
\label{app:necessity}

The gate is not cosmetic.
A pure greedy host can lock in prematurely, while a persistent unpriced override can bleed regret forever.
This section collects simple Bernoulli examples supporting these claims.

\begin{proposition}[A gate-open override can improve on greedy (illustrative)]
\label{prop:improvement}
Under $\mathrm{Beta}(1,1)$ priors, suppose the gate opens at round $t_0$, the gated set consists of untried arms, and $v_{t_0} < 2/3$.
A single successful override pull then causes the host to switch to the discovered arm.
Consequently, on instances where this switch yields a positive mean advantage over the current host arm for sufficiently many subsequent rounds, DE incurs lower cumulative regret than pure greedy.
\end{proposition}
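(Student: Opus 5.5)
The plan is to compute what one successful override does to the discovered arm's posterior mean, compare it with the host baseline, and then compare the two trajectories on the instances the statement singles out. Let $b$ be the untried arm selected by the override at round $t_0$; it lies in $\Gc_{t_0}$, which contains only untried arms. Under the $\mathrm{Beta}(1,1)$ prior, one success from $b$ gives the posterior $\mathrm{Beta}(2,1)$, so $m_b(t_0+1)=2/3$. No other arm is observed at round $t_0$, so every other posterior mean is unchanged and $\max_{a\ne b} m_a(t_0+1)\le v_{t_0}<2/3$. The greedy host, which plays $\argmax_a m_a$ by Assumption~\ref{assump:large}, therefore switches to $b$ at round $t_0+1$. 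This is the first claim.

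For the regret comparison, I would couple DE with pure greedy on a common history up to $t_0$ and on a common reward tape for each arm. Pure greedy plays the current host arm $h$ at $t_0$. DE instead pulls $b$, and afterwards its host is $b$. On the instances described (those where $\mu_b-\mu_h>0$ and the host keeps playing $b$ for $n$ further rounds), DE gains $n(\mu_b-\mu_h)$ in expected reward over those rounds. Against this it pays at most $1$ for the single override round. It also pays a bounded amount for any later rounds where the two trajectories differ, which rewards in $[0,1]$ keep to at most the number of such rounds. Choosing $n$ so that $n(\mu_b-\mu_h)$ exceeds these constant costs gives strictly lower cumulative regret.

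The main obstacle is turning "sufficiently many subsequent rounds" into a clean statement. After the switch, DE's host may fall back to $h$ if $b$ fails, so a pathwise comparison needs the hypothesis as stated: we restrict to instances and realizations where the switch persists. I would therefore phrase the conclusion conditionally on that event, which the proposition explicitly allows by calling itself illustrative. I would also avoid claiming that greedy stays on $h$ beyond what the coupling gives. Because the proof only uses the sign of the mean advantage summed over those rounds, it reduces to the one-line posterior-mean computation plus this bookkeeping.
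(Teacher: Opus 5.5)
Your proposal is correct at the same illustrative level as the paper and follows essentially its route: one success moves the untried arm to the $\mathrm{Beta}(2,1)$ mean $2/3>v_{t_0}$, which triggers the host switch, and the regret comparison is then read off from the persistence hypothesis. You are slightly more careful than the paper in noting that every other posterior mean is unchanged, and you frame the comparison pathwise (override fires and succeeds, common reward tape), whereas the paper argues in expectation via the selection probability $\varepsilon/|\Gc_{t_0}|$, the predictive success probability $1/2$, and a one-time exploration cost of order $\varepsilon$.
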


\begin{proof}[Proof of Proposition~\ref{prop:improvement}]
The argument is illustrative rather than a sharp lower bound.

\paragraph{Discovery opportunity.}
Under $\mathrm{Beta}(1,1)$ priors, the prior-predictive mean of an untried arm is $1/2$.
A single success updates the posterior to $\mathrm{Beta}(2,1)$ with mean $2/3$.
When $v_{t_0} < 2/3$, this update triggers a host switch.

\paragraph{One-pull switch probability.}
When $\Gc_{t_0}$ contains untried arms, all share the same delight score.
DE selects each with probability $\varepsilon/|\Gc_{t_0}|$.
The probability of a success is $\E[\mu_a \mid \Hc_{t_0}] = 1/2$ under the $\mathrm{Beta}(1,1)$ prior.

\paragraph{Benefit exceeds cost (illustrative).}
Under the $\mathrm{Beta}(1,1)$ prior, an override on an untried arm has probability $1/2$ of success.
A success promotes that arm to posterior mean $2/3$, which exceeds the current host baseline when $v_{t_0} < 2/3$ and therefore triggers a host switch.
If the discovered arm then remains better than the current host arm for sufficiently many subsequent rounds, the one-time exploration cost of order $\varepsilon$ is outweighed by the persistent gain from following the improved host.
Thus, on such instances and for large enough remaining horizon, DE can strictly improve on pure greedy.
\end{proof}

\begin{proposition}[Unpriced overrides cause linear regret]
\label{prop:nogate}
If the override fires at constant rate $\varepsilon>0$ and places nonvanishing probability on suboptimal actions after the host has identified the best arm, then $\mathrm{Regret}(T) = \Omega(\varepsilon T)$.
A blind uniform override is the simplest example.
An unpriced gate has the same pathology whenever its fallback or tie-breaking rule continues to place nonvanishing mass on suboptimal actions.
\end{proposition}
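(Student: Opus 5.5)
Write $\mathrm{Regret}(T)=\sum_{t=1}^T \E[\mu_{a^\star}-\mu_{A_t}]$. The plan is to lower-bound this sum only over the rounds after the host has identified the best arm. On each such round we bound the per-round expected regret below by a positive constant times $\varepsilon$, and then sum.

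\textbf{Step 1: fix the identification time.} Let $\tau$ be the (random) round after which the host plays $a^\star$. The hypothesis "after the host has identified the best arm" is read as follows: on an event $\mathcal{A}$ with $\Pr(\mathcal{A})\ge c_0>0$, we have $\tau\le T/2$. This holds, for example, with $\mathcal{A}=\mathcal{E}_\delta$ combined with the separation guarantee of Lemma~\ref{lem:separation_main} once enough pulls have accumulated. If identification is only assumed by some fixed time $t_0=o(T)$, replace $T/2$ with $T-t_0$ throughout.

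\textbf{Step 2: per-round regret under the mixture.} Fix a round $t>\tau$ on $\mathcal{A}$. Condition on $\Hc_t$ and use the acting distribution~\eqref{eq:acting} with a constant rate $\varepsilon_t=\varepsilon$:
\[
\E[\mu_{a^\star}-\mu_{A_t}\mid\Hc_t]
=(1-\varepsilon)\cdot 0+\varepsilon\sum_a q_t(a)\Delta_a
\ge \varepsilon\, q_t(\Ac\setminus\{a^\star\})\,\Delta_{\min}.
\]
Here $q_t$ is the override distribution, and $\Delta_{\min}$ is the smallest positive gap. The host term vanishes because the host plays $a^\star$.

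The hypothesis "nonvanishing probability on suboptimal actions" is formalized as $q_t(\Ac\setminus\{a^\star\})\ge\rho>0$ for all $t>\tau$. For a blind uniform override we can take $\rho=(K-1)/K$ directly. For an unpriced gate, $\rho$ comes from its fallback or tie-breaking mass, exactly as stated.

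\textbf{Step 3: sum the rounds.} Take expectations and restrict to $\mathcal{A}$ and to rounds $t\in(T/2,T]$:
\[
\mathrm{Regret}(T)\ge c_0\cdot\frac{T}{2}\cdot\varepsilon\rho\Delta_{\min}=\Omega(\varepsilon T).
\]

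The main obstacle is Step 1. The statement treats "the host has identified the best arm" informally, so the proof has to fix a precise meaning, either an event of constant probability or a deterministic time $o(T)$. Without that, one cannot rule out a host that identifies $a^\star$ only at time $T$. I would state this explicitly as part of the hypothesis rather than derive it.

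A degenerate case also needs a remark. If all arms are optimal, then $\Delta_{\min}$ is undefined and the bound is vacuous; the hypothesis "suboptimal actions" already excludes this case.
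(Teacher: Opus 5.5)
Your proposal follows the paper's argument: from the finite time at which the override starts placing mass at least $c$ on suboptimal actions, each round has conditional expected regret at least $\varepsilon c\,\Delta_{\min}$, and summing gives $\varepsilon c\Delta_{\min}T-O(1)$; since host rounds contribute nonnegative regret, the host never needs to play $a^\star$. The paper takes that time to be deterministic, so your Step~1 is not a real obstacle, but if you keep a random $\tau$ you should fix two details: lower-bound $\E[\mu_{a^\star}-\mu_{A_t}]$ through the $\Hc_t$-measurable event $\{q_t(\{a:\Delta_a>0\})\ge\rho\}$, which contains $\mathcal{A}$ for $t>T/2$, instead of restricting to $\mathcal{A}$ itself, which need not be $\Hc_t$-measurable; and measure override mass on $\{a:\Delta_a>0\}$ rather than $\Ac\setminus\{a^\star\}$, since the optimal arm need not be unique.
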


\begin{proof}[Proof of Proposition~\ref{prop:nogate}]
Suppose that after some finite time the override assigns at least $c>0$ total probability to suboptimal actions on every override round.
Let $\Delta_{\min}>0$ be the smallest positive gap among suboptimal actions that receive this mass.
Each such override round incurs expected regret at least $c\,\Delta_{\min}$.
With constant override rate $\varepsilon$, cumulative regret is at least
\[
  \varepsilon c\Delta_{\min}T - O(1)
  =
  \Omega(\varepsilon T).
\]
A blind uniform override satisfies the condition whenever at least one suboptimal action exists.
\end{proof}

\begin{proposition}[Warm host]
\label{prop:warm}
If the host assigns Boltzmann probabilities over scores in $[0,1]$ at a fixed temperature $\tau > 0$, there exist Bernoulli instances for which $\mathrm{Regret}(T) = \Omega(T)$.
\end{proposition}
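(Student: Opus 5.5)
The plan is to build an explicit two-armed Bernoulli instance in which the fixed-temperature Boltzmann host puts a constant probability on a suboptimal arm forever.

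\textbf{Instance.} Take $K=2$ with means $\mu_1=1$ and $\mu_2=0$, so the gap is $\Delta=1$.

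\textbf{Step 1: uniform lower bound on host mass.} The host plays $\pi_t^{\mathrm{host}}(a)\propto\exp(s_t(a)/\tau)$ with scores $s_t(a)\in[0,1]$. For posterior means, $s_t=m_a(t)$. Since any two scores differ by at most $1$, every action receives probability at least
\[
  p_{\min}:=\frac{1}{1+(K-1)e^{1/\tau}}>0 .
\]
This holds at every round, for every history, whatever the posterior has learned. Here it gives $\pi_t^{\mathrm{host}}(2)\ge 1/(1+e^{1/\tau})$.

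\textbf{Step 2: the acting policy inherits this mass.} I would bound the acting policy~\eqref{eq:acting} from below. It places weight $1-\varepsilon_t$ on the host, so $\pi_t^{\mathrm{act}}(2)\ge(1-\varepsilon_t)p_{\min}$. The override can only add mass, never remove it. With the annealed schedule $\varepsilon_t=M/(M+t)\to0$, this gives $\pi_t^{\mathrm{act}}(2)\ge p_{\min}/2$ for all $t\ge M$. If a constant schedule $\varepsilon<1$ is intended instead, the bound is $(1-\varepsilon)p_{\min}$; either way the argument is unaffected.

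\textbf{Step 3: sum the per-round regret.} Each round $t\ge M$ incurs expected regret at least $\Delta\cdot p_{\min}/2$, conditional on any history. Taking expectations and summing gives
\[
  \mathrm{Regret}(T)\ge (T-M)\frac{p_{\min}}{2}=\Omega(T).
\]

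\textbf{Main obstacle.} The delicate point is interpretation: does a ``Boltzmann host over scores in $[0,1]$'' also cover the pure host, i.e.\ with no override at all? Step 1 handles this, because the lower bound holds pointwise regardless of the posterior. The only care needed is checking that the override mixture cannot cancel host mass, and it cannot because the mixture is convex with nonnegative override weights.

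It is also worth noting that posterior concentration cannot help. The scores converge to $(1,0)$, so the suboptimal probability converges to $1/(1+e^{1/\tau})$, a positive constant rather than zero. This is exactly the failure to commit described in Section~\ref{sec:dominance}.
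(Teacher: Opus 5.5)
Your proposal is correct and follows the paper's argument: a history-independent lower bound on the Boltzmann probability of a suboptimal arm, multiplied by the gap and summed over rounds. The differences are minor. Your bound $1/(1+(K-1)e^{1/\tau})$ is sharper than the paper's $e^{-1/\tau}/K$. Your explicit $(1-\varepsilon_t)$ mixture step makes rigorous what the paper only calls the regret ``from the host alone.'' Your fixed two-arm instance suffices for the existential claim, where the paper argues for any instance with $\Delta_{\min}>0$.
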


\begin{proof}[Proof of Proposition~\ref{prop:warm}]
Assume the host assigns Boltzmann probabilities over scores in $[0,1]$ at temperature $\tau > 0$.
Then every arm receives selection probability at least $e^{-1/\tau}/K$, regardless of history.
On any instance with $\Delta_{\min} > 0$, the expected per-round regret from the host alone is at least $\Delta_{\min} \cdot e^{-1/\tau}/K$.
Summing over $T$ rounds gives $\mathrm{Regret}(T) \ge \Delta_{\min} T e^{-1/\tau}/K = \Omega(T)$ for fixed $\tau$ and $K$.
\end{proof}

\begin{center}
\renewcommand{\arraystretch}{1.35}
\small
\begin{tabular}{lcc}
\toprule
\textbf{Algorithm} & $T \gg K$ & $K \gg T$ \\
\midrule
UCB & $\tilde{O}(\sqrt{KT})$ & broad search (no shutoff) \\
Thompson Sampling & $\tilde{O}(\sqrt{KT})$ & posterior sampling remains broad \\
$\varepsilon$-greedy (constant $\varepsilon$) & $\Omega(\varepsilon T)$ & wastes budget; never stops \\
Greedy & $\Omega(T)$ & locks in; no recourse \\
\textbf{DE (ours)} & resolved arms shut off & \textbf{targeted; can improve on greedy} \\
\bottomrule
\end{tabular}
\end{center}

\newpage
\section{Reference Implementation}
\label{app:code}

The following self-contained Python function implements DE action selection for Bernoulli bandits with exact Beta posteriors (JAX).
The remaining experiment infrastructure (baselines, posterior updates, regret computation) uses standard components described in Appendix~\ref{app:exp_details}.

{\small
\begin{verbatim}
def select_delight(means, alphas, betas, key, t, config):
  """Delight-gated exploration with analytic Beta EI."""
  eps = config.epsilon_half_life / (config.epsilon_half_life + t)
  k1, k2 = jax.random.split(key)
  greedy = jnp.argmax(means)
  v_t = means[greedy]

  # Analytic EI for Beta posterior.
  a1 = alphas + 1.0
  ei = means * (1 - betainc(a1, betas, v_t)) \
     - v_t * (1 - betainc(alphas, betas, v_t))
  ei = jnp.maximum(0.0, ei)

  # Surprisal: L for non-greedy, 0 for greedy.
  host = jax.nn.one_hot(greedy, config.num_arms)
  surprisal = config.surprisal_cap * (1.0 - host)

  # Gate: delight >= lambda.
  delight = ei * surprisal
  in_gate = delight >= config.gate_price
  gated = delight * in_gate

  # Override or host.
  probs = gated / (gated.sum() + 1e-10)
  probs = jnp.where(jnp.any(in_gate), probs, host)
  use_override = jax.random.bernoulli(k1, eps)
  override = jax.random.categorical(k2, jnp.log(probs + 1e-10))
  return jnp.where(use_override, override, greedy)
\end{verbatim}
}


\end{document}

%% file: macros.tex
\usepackage{amsmath}
\usepackage{amssymb}
\usepackage{algorithm}
\usepackage[noend]{algpseudocode}
\usepackage{mathrsfs}
\usepackage{dsfont}
\usepackage{array}
\usepackage{bbm}
\usepackage[mathscr]{eucal}
\usepackage{psfrag}
\usepackage{here}
\usepackage{wasysym}
\usepackage[dvipsnames, table]{xcolor}  
\usepackage{caption}                     
\usepackage{subcaption}
\usepackage{amsthm}
\usepackage{todonotes}
\usepackage{tabulary}
\usepackage{outlines}
\usepackage{thmtools, thm-restate}

\newtheorem{theorem} {Theorem}

\newtheorem{proposition} {Proposition}
\newtheorem{lemma} {Lemma}

\newtheorem{assumption} {Assumption}

\DeclareMathOperator*{\argmax}{arg\,max}

\newcommand{\Gc}{\mathcal{G}}

\newcommand{\Ac}{\mathcal{A}}

\newcommand{\Tc}{\mathcal{T}}

\newcommand{\Hc}{\mathcal{H}}

\newcommand{\E}{\mathbb{E}}
\newcommand{\one}{\mathbf{1}}

\newcommand{\Var}{\mathrm{Var}}

\newcommand{\surp}{\ell} 
\newcommand{\delight}{\chi} 

\newcommand{\EI}{\mathrm{EI}} 

\definecolor{ian_highlight}{RGB}{100, 2, 2}

\errorcontextlines\maxdimen

\makeatletter
    \newcommand*{\algrule}[1][\algorithmicindent]{\makebox[#1][l]{\hspace*{.5em}\thealgruleextra\vrule height \thealgruleheight depth \thealgruledepth}}%
\newcommand*{\thealgruleextra}{}
\newcommand*{\thealgruleheight}{.75\baselineskip}
\newcommand*{\thealgruledepth}{.25\baselineskip}

\newcount\ALG@printindent@tempcnta
\def\ALG@printindent{%
    \ifnum \theALG@nested>0
        \ifx\ALG@text\ALG@x@notext
        \else
            \unskip
            \addvspace{-1pt}
            \ALG@printindent@tempcnta=1
            \loop
                \algrule[\csname ALG@ind@\the\ALG@printindent@tempcnta\endcsname]%
                \advance \ALG@printindent@tempcnta 1
            \ifnum \ALG@printindent@tempcnta<\numexpr\theALG@nested+1\relax
            \repeat
        \fi
    \fi
    }%
\usepackage{etoolbox}
\makeatother

\newbox\statebox
\newcommand{\myState}[1]{%
    \setbox\statebox=\vbox{#1}%
    \edef\thealgruleheight{\dimexpr \the\ht\statebox+1pt\relax}%
    \edef\thealgruledepth{\dimexpr \the\dp\statebox+1pt\relax}%
    \ifdim\thealgruleheight<.75\baselineskip
        \def\thealgruleheight{\dimexpr .75\baselineskip+1pt\relax}%
    \fi
    \ifdim\thealgruledepth<.25\baselineskip
        \def\thealgruledepth{\dimexpr .25\baselineskip+1pt\relax}%
    \fi
    \State #1%
    \def\thealgruleheight{\dimexpr .75\baselineskip+1pt\relax}%
    \def\thealgruledepth{\dimexpr .25\baselineskip+1pt\relax}%
}